\pgfplotsset{compat=1.5}
\newtheorem{theorem}{Theorem}[section]
\newtheorem{lemma}[theorem]{Lemma}
\newtheorem{definition}[theorem]{Definition}
\newenvironment{proofof}[1]{\begin{trivlist} \item {\bf Proof
#1:~~}}
  {\qed\end{trivlist}}
\newcommand{\namedref}[2]{\hyperref[#2]{#1~\ref*{#2}}}
\newcommand{\thmlab}[1]{\label{thm:#1}}
\newcommand{\thmref}[1]{\namedref{Theorem}{thm:#1}}
\newcommand{\lemlab}[1]{\label{lem:#1}}
\newcommand{\lemref}[1]{\namedref{Lemma}{lem:#1}}
\newcommand{\seclab}[1]{\label{sec:#1}}
\newcommand{\secref}[1]{\namedref{Section}{sec:#1}}
\newcommand{\alglab}[1]{\label{alg:#1}}
\renewcommand{\algref}[1]{\namedref{Algorithm}{alg:#1}}
\newcommand{\deflab}[1]{\label{def:#1}}
\newcommand{\defref}[1]{\namedref{Definition}{def:#1}}
\DeclareMathOperator*{\poly}{poly}
\newcommand{\ignore}[1]{}
\newif\ifnotes\notestrue %set this to true if notes are visible and to false (next line) if they should be hidden
\newcommand{\samson}[1]{\textcolor{blue}{{\bf (Samson:} {#1}{\bf ) }} \marginpar{\tiny\bf
             \begin{minipage}[t]{0.5in}
               \raggedright S:
            \end{minipage}}}
\newcommand{\david}[1]{\textcolor{purple}{{\bf (David:} {#1}{\bf ) }} \marginpar{\tiny\bf
             \begin{minipage}[t]{0.5in}
               \raggedright D:
            \end{minipage}}} 
\newcommand{\shenghao}[1]{\textcolor{brown}{{\bf (Shenghao:} {#1}{\bf ) }} \marginpar{\tiny\bf
             \begin{minipage}[t]{0.5in}
               \raggedright D:
            \end{minipage}}} 
\newcommand{\samson}[1]{}
\newcommand{\david}[1]{}
\newcommand{\shenghao}[1]{}
\renewcommand*{\@fnsymbol}[1]{\textcolor{mahogany}{\ensuremath{\ifcase#1\or *\or \dagger\or \ddagger\or
 \mathsection\or \triangledown\or \mathparagraph\or \|\or **\or \dagger\dagger
   \or \ddagger\ddagger \else\@ctrerr\fi}}}
\providecommand{\email}[1]{\href{mailto:#1}{\nolinkurl{#1}\xspace}}
\definecolor{mahogany}{rgb}{0.75, 0.25, 0.0}
\definecolor{darkblue}{rgb}{0.0, 0.0, 0.55}
\definecolor{darkpastelgreen}{rgb}{0.01, 0.75, 0.24}
\definecolor{darkgreen}{rgb}{0.0, 0.2, 0.13}
\definecolor{darkgoldenrod}{rgb}{0.72, 0.53, 0.04}
\definecolor{darkred}{rgb}{0.55, 0.0, 0.0}
\definecolor{forestgreenweb}{rgb}{0.13, 0.55, 0.13}
\definecolor{greencss}{rgb}{0.0, 0.5, 0.0}
\definecolor{bleudefrance}{rgb}{0.19, 0.55, 0.91}
  \DeclareFontShape{T1}{lmr}{m}{scit}{<->ssub*lmr/m/scsl}{}%
\newcommand{\wt}{\widetilde}
\newcommand{\R}{\mathbb{R}}
\renewcommand{\d}{\mathrm{d}}
\newcommand{\A}{\mathsf{A}}
\DeclareMathOperator*{\E}{{\mathbb{E}}}
\DeclareMathOperator{\vect}{vec}
\newif\Ifnotes\notestrue %set this to true if notes are visible and to false (next line) if they should be hidden
\begin{document}

\title{Towards Sampling Data Structures for Tensor Products in Turnstile Streams}
\author{
Zhao Song\thanks{University of California, Berkeley. 
E-mail: \email{magic.linuxkde@gmail.com}.
}
\and
Shenghao Xie\thanks{Texas A\&M University. 
E-mail: \email{xsh1302@gmail.com}.
Supported in part by NSF CCF-2335411.
}
\and
Samson Zhou\thanks{Texas A\&M University. 
E-mail: \email{samsonzhou@gmail.com}.
Supported in part by NSF CCF-2335411. 
The author gratefully acknowledges funding provided by the Oak Ridge Associated Universities (ORAU) Ralph E. Powe Junior Faculty Enhancement Award.
}
}
\maketitle

\begin{abstract}
This paper studies the computational challenges of large-scale attention-based models in artificial intelligence by utilizing importance sampling methods in the streaming setting. Inspired by the classical definition of the $\ell_2$ sampler and the recent progress of the attention scheme in Large Language Models (LLMs), we propose the definition of the attention sampler. Our approach significantly reduces the computational burden of traditional attention mechanisms. We analyze the effectiveness of the attention sampler from a theoretical perspective, including space and update time. Additionally, our framework exhibits scalability and broad applicability across various model architectures and domains. 
\end{abstract}

\section{Introduction}

In recent years, the field of artificial intelligence has witnessed a significant paradigm shift with the advent of attention-based models, particularly in the domains of natural language processing and computer vision \citep{vsp+17,dclt18,log+19,ydy+19,bmr+20,zrg+22,cnd+22,tli+23,tms+23,adobe_firefly,m23}. 
At the heart of these models lies the attention mechanism \citep{vsp+17}, which is a powerful tool for enhancing the performance of deep learning networks. 
In particular, the attention mechanism enables models to focus on relevant parts of the input data, thereby facilitating context-aware processing. 

However, as these models scale in size and complexity \citep{Zeng+24,Reid+24,ZhangCH+24,Dubey+24,Abdin+24}, the computational demands of the attention mechanism increase significantly, posing challenging barriers towards efficient scalability \citep{Fu24}.
Specifically, traditional attention mechanisms used in Transformer models \citep{vsp+17} require computing attention weights across all elements of the input sequence, leading to a \emph{quadratic} increase in computational complexity with respect to the sequence length \citep{as23,kmz23,hjk+23,zhdk23,as24,as24_neurips,as25_rope}. 
This computational burden becomes particularly pronounced in large-scale applications, hindering the usage of attention-based models in resource-constrained settings and limits their real-time processing capabilities. 
%Furthermore, the high computational cost also exacerbates the environmental impact of training and deploying these models, due to increased energy consumption and carbon footprint.

To deal with this problem, the core question we ask in this paper is: 
\begin{center}
{\it
Instead of computing all entries, can we recover the most important ones in efficient space and time?
}
\end{center}

\paragraph{Attention samplers.}
We adopt the classical idea of sampling a dataset, selecting ``important'' items to represent the entire dataset.
Sampling is a central and effective technique for analyzing large-scale datasets, which has broad application in the field of big data~\citep{Vitter85,GemullaLH08,CohenDKLT11,CohenDKLT14}, including network traffic monitoring~\citep{MaiCSYZ06,HuangNGHJJT07,ThottanLJ10}, database management~\citep{HaasS92,Haas16,CohenG19}, and data summarization~\citep{FriezeKV04,AggarwalDK09,MahabadiIGR19, IndykMGR20,MahabadiRWZ20}.
A well-known example is the $\ell_2$ sampler first asked by \cite{CormodeMR05} and subsequently studied by \cite{MonemizadehW10,AndoniKO11,JowhariST11,JayaramW21,PettieW25,SwartworthWZ25,WoodruffXZ25}: given a vector $x \in \mathbb{R}^n$, we sample an index $i \in [n]$ with probability roughly $\frac{x_i^2}{\|x\|_2^2}$.

To address the challenges in implementing large-scale attention schemes, we seek to sample the most important coordinates in attention computation, reducing computational overhead and computer storage. 
Inspired by the classical definition of the $g$-sampler on a vector, we propose the following attention sampler, which is thoroughly investigated in this paper.

\begin{definition}[Attention sampler]
Given matrix $A \in \R^{n \times d}$, vector $x \in \R^d$, and a distribution function $g$, the attention sampler samples index $i \in [n]$ with probability
%\begin{align*}
$
    p_i = \frac{g((Ax)_i)}{\sum_{j=1}^n g((Ax)_j)}+\frac{1}{\poly(nd)}
$.
\end{definition}
%By strategically sampling key elements from the input data, our approach significantly reduces the computational overhead associated with the attention mechanism.
%, while maintaining, or even enhancing, the model's performance.
%This paper presents a comprehensive exploration of our proposed sampling techniques, detailing the underlying principles, implementation strategies, and the resultant gains in computational efficiency.

The motivation of our definition lies in the internal structure of the attention mechanism.
Given input matrices $A_1$ and $A_2$, the linear attention matrix is defined as
$A_1 X A_2^\top$, where $X = QK^\top$ is the fused key and query matrix. 
For linear self-attention, $A_1$ and $A_2$ are identical.
Utilizing a well-known tensor product construction \citep{as24}, we simplify the expression of the attention matrix to a matrix-vector product.
Let $A = A_1 \otimes A_2$ and let $x = \vect(X)$. 
Then the vectorized linear attention matrix turns out to be $Ax = \vect ( A_1 X A_2^\top )$. 
Here, $\vect$ denotes the vector representation of a matrix by concatenating the rows. 
Therefore, our attention sampler detects the dominant entry in the linear attention matrix, providing an effective approximation of the attention scheme.

%Based on the above formulation, we put forward the following definition of attention samplers, which is thoroughly investigated in this paper.
Given unlimited space and time, the sampling problem is trivial since one can compute each entry explicitly and sample an index with the corresponding probability.
However, as mentioned earlier, we are not granted unlimited resource in real-world applications, for instance, in resource-constrained settings or in real-time processing.
This motivates us to investigate the attention sampler in the streaming model, where the input matrix $A$, the weight vector $x$, or both $A$ and $x$ arrive sequentially in a data stream, and the goal is to report a valid sample \emph{at all times} using efficient space and update time.
In this paper, we study turnstile data streams, where updates to the underlying data can either increase or decrease the corresponding values at each time. 

Indeed, as databases handle increasingly vast and dynamic real-time data, the streaming model has emerged as a vital framework for designing algorithms to process massive, constantly evolving datasets. 
Examples include real-time analysis of social media streams, sensor data for smart infrastructure, live video processing, detection of distributed denial of service (DDoS) attacks, and efficient indexing and querying in large-scale databases. 
In this work, we combine the streaming model with attention mechanisms and construct novel efficient attention samplers, which identify the critical coordinates in attention computation.
Our contributions can be summarized as follows:
 
\begin{itemize}
    \item For the softmax distribution $\langle \exp(Ax), {\bf 1}_n \rangle^{-1} \exp(Ax) $, we prove an $\Omega(n)$ space streaming sampler algorithm lower bound. (See \thmref{thm:softmax_lower_bound})
    \item As the softmax distribution has a strong lower bound, we then provide upper bounds for polynomial type samplers, i.e., $\ell_2$ sampling from $Ax$:
    \begin{enumerate}
        \item For updating $A$ and fixed $x$, our sampler takes $d \poly \left(\frac{1}{\epsilon},n\right)$ bits of space and update time (see \thmref{thm:update_A_fix_x}).
        \item For updating $A$ and fixed $x$, our sampler takes $d \poly \left(\frac{1}{\epsilon},n\right)$ bits of space and $O(1)$ update time (see \thmref{thm:fix_A_update_x}).
        \item For updating both $A$ and $x$, our sampler takes $d \poly \left(\frac{1}{\epsilon},n\right)$ bits of space and update time (see \thmref{thm:update_A_update_x}).
    \end{enumerate}
    \item For updating both $A$ and $x$, we also provide a lower bound of $\Omega(d)$ space (see \thmref{thm:update_A_update_x_lower_bound}).
    \item Toward tensor generalization, where we have updating $A_1 \in \R^{n \times d}$ or $A_2 \in \R^{n \times d}$ for $A = A_1 \otimes A_2 \in \R^{n^2 \times d^2}$ and fixed $x \in \R^{d^2}$, we sample $(i_1,i_2) = i \in [n^2]$ approximately according to the $\ell_2$ sampling distribution on $Ax \in \R^{n^2}$ using $O(nd)$ space, $O(n)$ update time (see \thmref{thm:tensor_sampler}). Note that the trivial result takes $O(n^2)$ space. 
\end{itemize}

%We implement a simple experiment to demonstrate our theoretical results. We do a comparison between our $\ell_2$ sampler and the uniform sampler on recovering the large coordinates of a given sparse attention matrix. The $\ell_2$ sampler succeeds with significantly smaller sampled indices, which proves its ability to detect large coordinates using relatively restricted memory.

\paragraph{Hardness of softmax attention.}
Our lower bound in the first result demonstrates the hardness for computing or approximating the softmax attention.
This aligns with the lower bound in \cite{as23}, which showed that approximating softmax attention up to small entry-wise error requires subquadratic time in $n$ assuming the Strong Exponential Time Hypothesis. 
These challenges motivate us to explore polynomial attention mechanisms. 
To that end, previous work has investigated the performance of polynomial attention from both theoretical and empirical perspectives.
For instance, the PolySketchFormer \citep{kmz23} demonstrated that polynomial attention achieves model quality comparable to softmax attentions with efficient low-dimensional approximations. 
Furthermore, polynomial attention schemes perform competitively in various vision and NLP tasks, including the linear attention in \cite{KoohpayeganiP24} and the polynomial attention in \cite{szjzl24}.
Building on these insights, we obtain efficient polynomial attention samplers in the streaming model, whose space and update time have no dependence on $n$ factors, effectively recovering the key components in the polynomial attention matrix. 

\paragraph{Streaming attention mechanism.}
Our polynomial attention samplers work in the streaming model, which matches the core idea of streaming Large Language Model (LLM) introduced and studied by \cite{XiaoTCHL24}, and recently gained increasing focus in LLM research and big-data analysis \citep{StratiMPTK24,YaoL024,Shikhar+25,XiaoTZGYTF025}.
The motivation is from long (or infinite) sequence generation, e.g., a chat bot having a day-long conversation.
When we apply LLMs in these scenarios, we often encounter a efficiency-performance trade-off: during the decoding stage, attention-based methods cache all Key and Value (KV) pairs, which requires excessive memory usage; in contrast, under restricted memory, the performance collapses once the sequence length exceeds the cache size. 
To deal with these drawbacks, \citep{XiaoTCHL24} trained the models with a finite attention window to work on text of infinite length.
Unlike \cite{XiaoTCHL24}, our model is dynamic and data-driven, supporting both model weights and input tokens to constantly change.
We note that our sampler provides a correct attention sample at all times using efficient space.
Thus, we identify the important coordinates in attention computing without probing each KV pair, which has high potential in enhancing the performance of streaming LLMS.

\paragraph{Sparse attention mechanism.}
%Our proposed methods have various important applications in LLM. 
Another practical relevance of our attention sampler is sparse attention mechanism.
The attention matrix has been shown to be naturally sparse empirically and theoretically (see e.g. \citep{dsy24}). 
Based on this observation, researchers seek to reduce computation by sampling the attention layers \citep{cgrs19,kkl20,wlk+20,as23,bsz23,dms23,LaiLLMZ25,XiaoTZGYTF025,Zhang+25}. 
In general, they construct a \emph{sparse mask} that selects the importance entries in the attention multiplications while others are zeroed out. 
Then, they compute the partial attention corresponding to those in the sparse mask.
Specifically, \cite{XiaoTZGYTF025} explores the sparse attention with streaming heads.
Our attention sampler recovers large coordinates from the attention matrix given specific streamed inputs and weights. 
Thus, the sampler serves as an efficient subroutine in their sparse-attention sampling schemes, evaluating and enhancing the effectiveness of their construction of the sparse mask. 

\paragraph{Streaming algorithms.}
In addition, our sampler can be integrated into inner product computation (see e.g. \cite{WoodruffZ21}), which is a cornerstone for model training and attention computation.
In fact, classical $\ell_p$ samplers also serve as black-box subroutines in many other streaming algorithms, including finding heavy hitters, $F_p$ moment estimation, and cascaded norm approximation \citep{AndoniKO11,JowhariST11,JayaramW21,WoodruffZ21}.
Therefore, our attention sampler can be applied to discover essential properties of the attention scheme, e.g., the norm of the attention matrix.

\section{Related Work}
In this section, we discuss a number of related works and their relevant implications on our results. 

\paragraph{On sampling.} %%% TCS papers
% Given a vector $v\in U^n$ whose coordinates are elements from a universe $U$ and a non-negative weight function $W:U\to\mathbb{R}^{\ge 0}$, a fundamental goal is to return an index $i\in\{1,\ldots,n\}$ with probability proportional to $W(v_i)$. 
% The definition of $U$ permits settings such as $U=\mathbb{R}^d$, so that each coordinate is a row of a matrix or a $d$-dimensional point, or $U$ may be a subset of the set of all matrices or tensors. 
% In perhaps the most well-studied setting, each coordinate is a real number, so that $U=\mathbb{R}$ and the weight function is chosen from the class $W(x)=|x|^p$ for $p\ge 0$. 
% The problem is particularly interesting when the vector $v\in U^n$ is implicitly defined through a data stream, i.e., a sequence of $m$ updates to the coordinates of $v$, and the goal is to perform the sampling procedure using space sublinear in $n$ and $m$. 
Given a vector $v \in \R^n$ and a distribution function $g$, recall that the classical $g$-sampler samples index $i \in [n]$ with probability
%\begin{align*}
$
    p_i = \frac{g(v_i)}{\sum_{j=1}^n g(v_j)}
$.
A well-known example is the $\ell_p$ sampling defined by $g(z) = |z|^p$ for $p\ge 0$.
The existence of such a $\ell_p$ sampler algorithms first posed as a question by~\cite{CormodeMR05} in 2005. 
\cite{MonemizadehW10} partially answered this question in the affirmative by giving an $\ell_p$ sampler using polylogarithmic space for $p\in[1,2]$, although the sampling probabilities were distorted by a multiplicative $(1+\epsilon)$ factor and an additive $\frac{1}{\poly(n)}$ factor. 
We note that the sampler is perfect if there is no $\epsilon$-multiplicative distortion; it is truly perfect if there is no additive distortion, i.e., the sampling probability is exact.
The space requirements of the algorithm were subsequently improved \citep{AndoniKO11,JowhariST11} and extended to other choices of index domain $U$ and weight function $W$~\citep{CohenG19,MahabadiRWZ20,MahabadiWZ22}, while retaining a multiplicative distortion in the sampling probability. 
Surprisingly, \cite{JayaramW21} showed that it is possible to achieve no perfect samplers while using polylogarithmic space, while conversely \cite{JayaramWZ22} showed that truly perfect samplers would require linear space, essentially closing the line of work studying the space complexity of $\ell_p$ samplers for $p \in [1,2]$.
It should be noted however, achieving such guarantees (no additive distortion) in sub-polynomial update time while retaining the space guarantees remains an intriguing open question \citep{JayaramWZ22}. 
For the other regime of $p > 2$, recently, \cite{WoodruffXZ25} complemented the results by providing efficient perfect $\ell_p$ samplers for $p > 2$.
\cite{SwartworthWZ25} achieved perfect samplers with polylogarithmic update time for $p<2$, improving on the previous update time.
For a more comprehensive background on samplers, we refer to the survey by \cite{CormodeJ19}.

\paragraph{On tensors.} %%% Tensor

In the realm of tensor decomposition, the canonical polyadic (CP) decomposition, specifically the CANDECOMP/PARAFAC method, stands out for its unique ability to break down tensors into rank-1 tensors in a singular way, distinct from matrix decomposition \citep{h70, swz16}. This method, having applications in computational neuroscience, data mining, and statistical learning \citep{wtsa15}, emphasizes the rigidity and uniqueness of tensor decomposition. Earlier studies \citep{t10, ptc13, cv14, hnh+13, kphf12, wlsh14, bs15} have delved into efficient tensor decomposition methods. Subsequent works introduced methods for fast orthogonal tensor decomposition using random linear sketching techniques \citep{wtsa15} and explored symmetric orthogonally decomposable tensors' properties, integrating spectral theory \citep{r16, r17b}. Additionally, importance sampling for quicker decomposition was proposed \citep{swz16}. \citep{dgs23} studies the tensor cycle low rank approximation problem.

In algebraic statistics, tensor decompositions are linked to probabilistic models, particularly in determining latent variable models' identifiability through low-rank decompositions of specific moment tensors \citep{amr09, aprs11, rs12}. Kruskal's theorem \citep{k77} was pivotal in ascertaining the precision of model parameter identification. However, this approach, assuming an infinite sample size, does not provide the minimum sample size for learning model parameters within given error bounds. 
A more robust uniqueness guarantee is needed to ensure that the low-rank decomposition of an empirical moment tensor approximates that of an actual moment tensor, thus offering more insight into empirical moment tensors' decomposition.

\paragraph{On sketching.}

The application of sketching and sampling techniques in numerical linear algebra has been remarkably effective, revolutionizing a broad spectrum of core tasks. These methods are crucial in linear programming (LP), as evidenced by \cite{cls19,jswz21,y20,gs22}, and have significantly impacted tensor approximation \citep{swz19, mwz22, dgs23}. Sketching and sampling techniques also have been widely applied in matrix completion \citep{gsyz23}, matrix sensing \citep{qsz23,dls23}, submodular function maximization \citep{qsw23}, dynamic sparsification \citep{djs+22}, dynamic tensor product regression \citep{rsz22}, and semi-definite programming \citep{syyz22-lichen}. Additionally, sketching has been pivotal in iterative sparsification problems \citep{sxz22}, adversarial training \citep{gqls23}, kernel density estimation \citep{qrs+22}, solving the distance oracle problem \citep{dswz22}, and empirical risk minimization \citep{lsz19, qszz23}. Its applications furthermore extends to relational databases \citep{qjs+22} and  Large Language Model (LLM) research \citep{dms23, dls23, gsy23, lsz23}.

\paragraph{On theoretical attention.} %%% ML papers.

A comprehensive body of research, including studies \citep{cgrs19,kkl20,wlk+20,dkod20,kvpf20,cdw+21,cdl+22,zhdk23,as23,bsz23,dms23,kmz23,as24,hjk+23,ag23,mda22,as24_neurips,as25_rope,as25_weight}, has progressively shed light on the complexities and optimization of attention matrix computation. This exploration has been further enriched by insights into the effectiveness of attention mechanisms in Transformers \citep{dgv+18, vbc20, zkv+20, egkz21, szks21, wcm21, dsx23, dlms23}. Among these, \cite{zpga23} revealed the adeptness of mid-scale masked language models in identifying syntactic elements, paving the way for innovations like partial parse tree reconstructions. Inspired the exponential mechanism in attention structure, \cite{gms23} provide an analysis which shows exponential regression within the over-parameterized neural tangent kernel framework can converge. In the over-constrained setting, several work show the convergence for attention inspired regression problem \citep{lsz23,dls23}.

\paragraph{Organiation of the rest of the paper.}
In \secref{sec:preli}, we provide some standard notations and definitions in literature. In \secref{sec:exp}, we study the exponential sampler. 
In \secref{sec:upper_A_x}, we study the streaming upper for the $\ell_2$ sampling problem, i.e., sampling coordinates from a vector $Ax$, where $A$ and $x$ may be updated across a data stream. 
In \secref{sec:lower_A_x}, we present lower bounds for the same $\ell_2$ sampling problem. 
In \secref{sec:tensor}, we discuss the tensor sampling problem. 
%Finally, in Section~\ref{sec:experiment}, we provide experimental verification to our theoretical results.
%In Section~\ref{sec:conclusion}, we provide a conclusion. In Section~\ref{sec:limitation}, we discuss the limitaiton of this paper. In Section~\ref{sec:impact}, we discuss the broader impact of our work.

\section{Preliminaries}\seclab{sec:preli}

For any positive integer $n$, we use $[n]$ to denote the set $\{1,2,\cdots, n\}$. 
We use $\E[ \cdot ]$ to denote the expectation. We use $\Pr[ \cdot ]$ to denote the probability. 
We use ${\bf 1}_n$ to denote a length-$n$ vector where all the entries are ones. 
Given two length-$n$ vectors $x,y\in\mathbb{R}^n$, we use $\langle x,  y \rangle$ to denote the inner product between $x$ and $y$, i.e, $\langle x, y \rangle : = \sum_{i=1}^n x_i y_i$. 
For a vector $x\in \R^n$, we use $\exp(x) \in \R^n$ to denote a vector that has length $n$ and the $i$-th entry is $\exp(x_i)$.  
For a matrix $A$, we use $\exp(A)$ to denote the matrix that $(i,j)$-th coordinate is $\exp(A_{i,j})$. 
For a vector $x$, we use $\| x \|_2 : = ( \sum_{i=1}^n x_i^2 )^{1/2} $. 
We use $\| x \|_1 : = \sum_{i=1}^n |x_i|$. 
We use $\| x \|_0$ to denote the $\ell_0$ norm of $x$, which is the number of nonzero entries in $x$. 
We use $\| x \|_{\infty}$ to denote the $\ell_{\infty}$ norm of $x$, which is $\max_{i\in [n]} |x_i|$.  

Let $n_1, n_2, d_1, d_2$ be positive integers. 
Let $A \in \R^{n_1 \times d_1}$ and $B \in \R^{n_2 \times d_2}$. 
We define the Kronecker product between matrices $A$ and $B$, denoted $A \otimes B \in \R^{n_1 n_2 \times d_1 d_2}$, by $(A \otimes B)_{(i_1 - 1) n_2 + i_2, (j_1-1)d_2+j_2}$, to be equal to $A_{i_1,j_1} B_{i_2,j_2}$, where $i_1 \in [n_1], j_1 \in [d_1], i_2 \in [n_2], j_2 \in [d_2]$. 

We use $\poly(n)$ to denote $n^C$ where $C>1$ is some fixed constant. For any function $f$, we use $\wt{O}(f)$ to denote $f \cdot \poly(\log f)$. 
For two sets $A$ and $B$, we use $A \cap B$ to denote their intersection. 
We use $|A \cap B|$ to denote the cardinality of $A \cap B$. 
We use $A \cup B$ to denote the union of $A$ and $B$.

\paragraph{TensorSketch.}
We next define TensorSketch \citep{p13}, which has been extensively used in many sketching and optimization problems~\citep{dssw18,swz19,djs+19,akk+20,swyz21,szz21,sxz22,z22,syz23_sdp}. 
\cite{sxz22} defined {\sf TensorSparse} by composing Sparse embedding \citep{nn13,c16} with a tensor operation \citep{p13}.

\begin{definition}[{\sf TensorSparse}, see Definition~7.6 in \cite{sxz22}]
\deflab{def:tensor_sparse}
Let $h_1,h_2:[n] \times [s]\rightarrow [m/s]$ be $O(\log 1/\delta)$-wise independent hash functions and let $\sigma_1,\sigma_2:[n ]\times [s]\rightarrow \{\pm 1\}$ be $O(\log 1/\delta)$-wise independent random sign functions. Then, the degree two tensor sparse transform, $S:\R^n \times \R^n \rightarrow \R^m$ is given as: 
\begin{align*}
   & ~  R_{r,(i,j)} =  \exists k\in [s]: \sigma_1(i,k)\sigma_2(j,k)/\sqrt{s}\cdot %\\
   {\bf 1}[ ((h_1(i,k)+h_2(j,k))~\text{mod~}m/s)+(k-1)m/s=r]
\end{align*}
\end{definition}
For $s=1$, the above definition becomes TensorSketch \citep{p13}.

\section{Exponential Sampler}
\seclab{sec:exp}
In this section, we define and consider exponential samplers.  
We then show strong space lower bounds for achieving such a data structure when the input dataset arrives in a data stream. 

Let us firstly describe the offline version: 
\begin{definition}[Exponential sampler]
Given matrix $A \in \R^{n \times d}$ and $x \in \R^d$, the goal is to sample index $i \sim [n]$ with probability
%\begin{align*}
$
    p_i = \langle \exp(Ax) , {\bf 1}_n \rangle^{-1} \cdot \exp( Ax )_i,
$
%\end{align*}
where ${\bf 1}_n$ denotes a length-$n$ vector, $\exp(Ax) \in \R^n$ denotes a length-$n$ vector with $\exp(Ax)_i = \exp( (Ax)_i )$, and $\exp(z)$ is the usual exponential function. 
%Here
%\begin{itemize}
    %\item Let ${\bf 1}_n$ denote a length-$n$ vector 
    %\item Let $\exp(Ax) \in \R^n$ denote a length-$n$ vector where $\exp(Ax)_i = \exp( (Ax)_i )$
    %\item $\exp(z)$ is exponential function. (We can either treat it as $e^z$ or $2^z$, it depends on which one is more convenient.)
%\end{itemize}
\end{definition}
Now, consider $y = Ax \in \mathbb{R}^n$, where either $A$ or $x$, or both are arriving in a data stream. 
% Note that at the end of the stream, we only need to sample one index $i\in[n]$. 
% On the other hand, there are three possibilities for streaming version:
% \begin{itemize}
%     \item Both $A$ and $x$ arrive in streaming fashion
%     \item $A$ is fixed but $x$ arrives in streaming fashion
%     \item $x$ is fixed but $A$ arrives in streaming fashion
% \end{itemize}
% We consider each of these cases separately. 
We use the following definition for each of the various cases:
\begin{definition}\deflab{def:exponential_sampler}
Let $C >0$ be any fixed constant and let $C_0 \in [n^{-C}, n^C]$. 
Let $y$ be a vector. 
Then the exponential sampler outputs an index $j^*$ such that for all $i\in[n]$,
%\begin{align*}
$
    \Pr[j^*=i] = C_0\cdot\frac{ \exp(y_i)}{ \langle \exp(y), {\bf 1}_n \rangle}. 
$
%\end{align*}
\end{definition}

We first recall the (two-party) set-disjointness communication problem $\mathsf{SetDisj}_n$, in which two parties Alice and Bob have subsets $A$ and $B$, respectively, of $[n]$. 
Note that we can equivalently view $A$ and $B$ as binary vectors in $n$-dimensional space, serving as the indicator vector for whether each index $i\in[n]$ is in the player's input subset. 
The task for the players is to determine whether there exists a common element in their intersection, i.e., whether there exists $i\in[n]$ such that $i\in(A\cap B)$ or equivalently, $A_i=B_i=1$. 
In fact, the problem promises that either the inputs are completely disjoint, $|A\cap B|=0$ or the inputs contain only a single coordinate in their intersection, $|A\cap B|=1$. 
We recall the following standard communication complexity result of set-disjointness. 
\begin{theorem}[\cite{KalyanasundaramS92,Razborov92,Bar-YossefJKS04}]
\thmlab{thm:set:disjointness}

Any protocol that solves the set-disjointness problem $\mathsf{SetDisj}_n$ with probability at least $\frac{3}{4}$ requires $\Omega(n)$ bits of total communication. 
\end{theorem}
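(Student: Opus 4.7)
The plan is to prove this classical lower bound via the information complexity framework of Bar-Yossef, Jayram, Kumar, and Sivakumar. Since the communication cost of any protocol $\Pi$ upper bounds its external information cost, i.e., $\lvert\Pi\rvert \ge I(X,Y; \Pi \mid R)$ where $R$ denotes the public randomness, it suffices to exhibit an input distribution $\mu$ under which every protocol solving $\mathsf{SetDisj}_n$ with probability $3/4$ must carry $\Omega(n)$ bits of information about its inputs.

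First I would exploit the decomposition $\mathsf{SetDisj}_n(X,Y) = \neg \bigvee_{i \in [n]} (X_i \wedge Y_i)$ and perform a direct-sum reduction from $n$ independent copies of the single-bit $\mathsf{AND}$ function. Choose the hard distribution $\mu = \nu^{\otimes n}$, where $\nu$ is the ``collapsing'' distribution uniform on $\{(0,0),(0,1),(1,0)\}$; note that every $\mu$-input is automatically disjoint, so the protocol's behavior there is constrained only by its correctness on nearby off-support instances. Embedding a one-bit $\mathsf{AND}$ instance $(a,b)$ into coordinate $i$, with the remaining $n-1$ coordinates sampled from $\nu$ using public randomness, lets any disjointness protocol solve $\mathsf{AND}$ at coordinate $i$ with the same error. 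A chain-rule / subadditivity argument on the product $\mu$ then yields $\mathrm{icost}_\mu(\Pi) \ge \sum_{i=1}^{n} \mathrm{icost}_\nu(\mathsf{AND})$, reducing the whole problem to a single-bit constant information lower bound.

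The main obstacle, and the conceptual heart of the argument, is to establish $\mathrm{icost}_\nu(\mathsf{AND}) = \Omega(1)$ for any protocol that computes one-bit $\mathsf{AND}$ with bounded error. I would prove this via the cut-and-paste / Hellinger technique. For a deterministic protocol with leaf-transcript distribution $\Pi_{ab}$ on input $(a,b)$, the rectangle property of protocol trees gives the Bhattacharyya identity $\mathrm{BC}(\Pi_{00},\Pi_{11}) = \mathrm{BC}(\Pi_{01},\Pi_{10})$, equivalently $h^2(\Pi_{00}, \Pi_{11}) = h^2(\Pi_{01}, \Pi_{10})$ in squared Hellinger distance. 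Correctness forces $\Pi_{00}$ and $\Pi_{11}$ to produce different outputs with high probability, hence $h^2(\Pi_{00},\Pi_{11}) = \Omega(1)$; meanwhile, the conditional information quantity contributing to $\mathrm{icost}_\nu(\Pi)$ is bounded below by a constant multiple of $h^2(\Pi_{01},\Pi_{10})$ via the standard KL-to-Hellinger Pinsker-type inequality. Chaining the two relations gives the $\Omega(1)$ single-coordinate bound; combining with the direct-sum step from the second paragraph then yields information cost $\Omega(n)$, and therefore total communication $\Omega(n)$, as claimed. The final routine step is to reduce randomized protocols to the deterministic case used above by fixing a good setting of the public coins.
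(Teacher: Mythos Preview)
The paper does not prove this theorem at all: it is stated as a classical result with citations to \cite{KalyanasundaramS92,Razborov92,Bar-YossefJKS04} and then used as a black box in the proof of \thmref{thm:softmax_lower_bound}. So there is no ``paper's own proof'' to compare against.

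That said, your sketch is a faithful outline of the Bar-Yossef--Jayram--Kumar--Sivakumar information-complexity proof: the collapsing distribution $\nu$ on $\{(0,0),(0,1),(1,0)\}$, the direct-sum/embedding argument to reduce to a single-coordinate $\mathsf{AND}$, and the cut-and-paste/Hellinger step to get the $\Omega(1)$ per-coordinate information cost are exactly the ingredients of that proof. One small remark: the quantity you want to lower bound is the \emph{conditional} information cost (conditioning on the ``switch'' variable that determines which party's bit is forced to $0$ under $\nu$), not the unconditional external information cost; without that conditioning the direct-sum subadditivity step does not go through cleanly, because the coordinates of $\nu^{\otimes n}$ are independent but not a product of Alice-marginal times Bob-marginal. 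This is a standard point in the BJKS framework and easy to patch, but worth stating precisely if you actually write out the proof.
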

We show that even a sampler that relaxes the probability distribution defined in \defref{def:exponential_sampler} up to a factor of $n^C$ is infeasible in the streaming model. 
\begin{theorem}\thmlab{thm:softmax_lower_bound}
Let $y\in\mathbb{R}^n$ that arrives as a data stream and let $C>0$ be a constant. 
Then any algorithm that samples an index $i\in[n]$ with probability proportional to $p_i = \frac{ \exp(y_j)}{ \langle \exp(y), {\bf 1}_n \rangle }$ must use $\Omega(n)$ bits of space, even if the sampling probabilities are allowed to be distorted by as large as $n^C$ and even if $\|y\|_\infty=O(\log n)$. 
%Let us assume that $\| y \|_{\infty} = O(\log n)$. 
%There is no algorithm that uses $o(n)$ space to generate a exponential sampler (Definition~\ref{def:exponential_sampler}) with respect $\langle \exp(y) , {\bf 1}_n \rangle^{-1} \exp(y)$.
\end{theorem}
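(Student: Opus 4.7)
The plan is to reduce from the two-party set-disjointness problem $\mathsf{SetDisj}_n$, leveraging the $\Omega(n)$ communication lower bound from \thmref{set:disjointness}. Given an instance with Alice holding $A \subseteq [n]$ and Bob holding $B \subseteq [n]$ under the standard promise $|A| = |B| = n/2$ and $|A \cap B| \in \{0, 1\}$, I would encode their inputs into a turnstile stream for $y \in \mathbb{R}^n$. Fix a constant $c$ sufficiently large in terms of $C$, e.g.\ $c = 10(C+1)$. Alice processes updates $y_i \leftarrow y_i + c \log n$ for each $i \in A$, then forwards the sampler's state to Bob; Bob processes updates $y_i \leftarrow y_i + c \log n$ for each $i \in B$ and queries the sampler. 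Note that $\|y\|_\infty \leq 2c \log n = O(\log n)$, as required by the hypothesis of the theorem.

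Next I would analyze the induced distribution in the two cases. After both players' updates, $y_i = 2c \log n$ if $i \in A \cap B$, $y_i = c \log n$ if $i \in A \triangle B$, and $y_i = 0$ otherwise. In the disjoint case every nonzero coordinate equals $c \log n$, which yields a (near-)uniform distribution with $p_i = O(1/n)$ on $A \cup B$ and negligible mass elsewhere. In the unique-intersection case $A \cap B = \{i^*\}$, the total mass is dominated by $\exp(y_{i^*}) = n^{2c}$, giving $p_{i^*} \geq 1 - O(n^{1-c})$ while every other $p_i$ is at most $n^{-c}$. Even after the worst-case $n^C$ multiplicative distortion, the sampler then outputs $i^*$ (which lies in $A \cap B$) with probability at least $n^{-C}(1 - o(1))$, whereas in the disjoint case the event $\{j^* \in A \cap B\}$ is vacuous.

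The distinguishing protocol runs the sampler $T = \Theta(n^C \log n)$ times with independent randomness (either via $T$ parallel independent copies of the sketch, or by assuming the algorithm supports repeated independent queries from a single sketch with fresh randomness per query). For each output $j_t^* \in B$, which Bob checks locally, Bob transmits $j_t^*$ to Alice, who replies with $\mathbb{1}[j_t^* \in A]$. Bob outputs ``intersect'' iff some round reports $j_t^* \in A \cap B$. The success probability in the intersecting case is at least $1 - (1 - n^{-C}(1 - o(1)))^T \geq 2/3$, while the false-positive probability in the disjoint case is exactly $0$. Since the total communication is $s + O(T \log n) = s + O(n^C \log^2 n)$, comparing against the $\Omega(n)$ lower bound from \thmref{set:disjointness} forces $s = \Omega(n)$ whenever the amplification overhead is sublinear in $n$.

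The main obstacle is the multiplicative distortion factor $n^C$: unlike a perfect softmax sampler, which would return the unique intersecting index with probability $1 - o(1)$, the distorted sampler is only guaranteed to return $i^*$ with probability $n^{-C}$. This necessitates polynomial-in-$n$ amplification, whose cost must be carefully balanced against the $\Omega(n)$ set-disjointness budget; keeping the amplification overhead $O(n^C \log^2 n)$ strictly sublinear in $n$ is the crux of the argument and what determines the final lower bound. Pushing the argument through for all fixed constants $C$ may additionally require a sharper signal (e.g.\ choosing $c$ as a larger function of $C$, or reducing from augmented-index rather than set-disjointness) so that the amplification cost remains below the communication lower bound.
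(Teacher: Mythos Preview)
Your reduction from set-disjointness and your encoding of the inputs are exactly the paper's approach, and your choice $c=10(C+1)$ is more than sufficient. The gap is in your analysis of the distorted sampler: you only use the lower bound $q_{i^*}\ge n^{-C}p_{i^*}\approx n^{-C}$ and then try to amplify with $T=\Theta(n^{C}\log n)$ repetitions. As you yourself note, this amplification overhead is $\Theta(n^{C}\log^{2}n)$, which is \emph{not} sublinear in $n$ for any $C\ge 1$, so the argument collapses precisely in the regime the theorem is supposed to cover.

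The missing observation is that the distorted probabilities $(q_j)_{j\in[n]}$ still form a probability distribution and hence sum to $1$. In the intersecting case, every $j\neq i^*$ has true probability $p_j\le n^{-c}$, so after distortion $q_j\le n^{C}p_j\le n^{C-c}$ and thus $\sum_{j\neq i^*}q_j\le n^{1+C-c}$. With $c=10(C+1)$ this is $o(1)$, forcing $q_{i^*}\ge 1-o(1)\ge 3/4$. Hence a \emph{single} query to the sampler already returns $i^*$ with probability $3/4$, no amplification is needed, and the total communication is $s+O(\log n)$, giving $s=\Omega(n)$ directly. This is exactly what the paper does (with the larger constant $100C$ in place of your $10(C+1)$), and it resolves the obstacle you flagged in your final paragraph.
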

\begin{proof}
Let $A,B\in\{0,1\}^n$ be input vectors from the set disjointness problem, so that the goal is to determine whether there exists $i\in[n]$ such that $A_i=B_i=0$. 
Observe that Alice and Bob can multiply $A$ and $B$ by $100C\log n$ for some constant $C>0$. 
Now, note that in the disjoint case, we have that $\|A+B\|_{\infty} = 100C \log n$ and in the non-disjoint case, we have that $\|A+B\|_{\infty} = 200C \log n$. 
In particular, in the non-disjoint case, there exists $i\in[n]$ such that $A_i+B_i=200C\log n$ and for all $j\neq i$, we have that $A_j+B_j\le 100C\log n$. 
Hence, in the non-disjoint case, any exponential sampler will output $i$ with probability proportional to $\exp(200C\log n)$ and output $j\neq i$ with probability proportional to $n\cdot\exp(100C\log n)$. 
Even if the sampling probabilities are distorted by a factor of $n^C$, any exponential sampler would output $i$ with probability at least $\frac{3}{4}$. 

Thus, Alice and Bob can use such a data structure to sample an index $i$ and then check whether $A_i=B_i=1$. 
In particular, Alice can first create a data stream encoding the vector $A$, run the sampling algorithm on the data stream, and then pass the state of the algorithm to Bob. 
Bob can then create another portion of the data stream encoding an addition of the vector $B$, take the state of the algorithm from Alice, run the sampling algorithm on the portion of the data stream, and query the algorithm for an index $i$. 
Bob can then take the index and pass it to Alice, and the two parties can finally communicate whether $A_i=B_i=1$, thereby solving set-disjointness with probability at least $\frac{3}{4}$. 
Note that the communication of the protocol is the space used by the sampling algorithm. 
Therefore by \thmref{thm:set:disjointness}, such a sampler must use $\Omega(n)$ bits of space. 
\end{proof}

\section{\texorpdfstring{$\ell_2$}{L2} Sampler Upper Bound with \texorpdfstring{$A$}{A} and \texorpdfstring{$x$}{x}}\seclab{sec:upper_A_x}
In this section, we describe a standard data structure for $\ell_2$ sampling. 
We start with providing the definition of $\ell_2$ sampler as follows,
\begin{definition}
Let $n$ denote a positive integer. Let $\epsilon \geq 0$ denote a parameter. 
In $\ell_2$ sampling, we receive each coordinate of $y \in \mathbb{R}^n$ in a turnstile data stream, and the goal is to output an index $I\in[n]$ at all times such that for each $j\in[n]$,
$
    \Pr[I=j] =  (1\pm \epsilon) \cdot \frac{ | y_j |^2 }{ \| y \|_2^2 } + 1/\poly(n).
$
 
\end{definition}

We introduce various instantiations of the $\ell_2$ sampler for sampling entries from a vector $Ax\in\mathbb{R}^n$, based upon whether the matrix $A\in\mathbb{R}^{n\times d}$ is updated during the data stream, whether the vector $x\in\mathbb{R}^d$ is updated during the data stream, or both. 
To begin with, we review the standard $\ell_2$ sampler in the streaming setting.

\subsection{\texorpdfstring{$\ell_2$}{L2} Sampler}\seclab{sec:background_on_ell_2}

We give the full details of the standard $\ell_2$ sampler from \cite{JowhariST11,MahabadiRWZ20} in \algref{alg:sampler}. 
In this context, the goal is to sample a coordinate from $y\in\mathbb{R}^n$ with probability proportional to $|y_i|^2$, up to $\frac{1}{\poly(n)}$ factors. 
The main intuition is that if $u_i\in[0,1]$ is a uniform random variable, then $\mathbf{Pr}\left[\frac{y_i^2}{u_i}\ge\|y\|_2^2\right]$ is precisely $\frac{y_i^2}{\|y\|_2^2}$. 
If we can identify this case and return $i\in[n]$, then the sampling distribution roughly matches the $\ell_2$ sampling probability distribution. 
Of course, there are various complications such as computing the quantities $y_i^2$ and $\|y\|_2^2$, as well as ensuring exactly one index $i\in[n]$ satisfies $\frac{y_i^2}{u_i}\ge\|y\|_2^2$, but these can all be handled by standard approaches. 
Indeed, the proof of correctness is verbatim from \cite{JowhariST11,MahabadiRWZ20}. 
The challenge is how to implement the data structures of $y$, which is implicitly defined as $Ax$. 
By comparison, in the standard setting of $\ell_2$ samplers~\citep{MonemizadehW10,AndoniKO11,JowhariST11,MahabadiRWZ20,JayaramW21,JayaramWZ22,SwartworthWZ25,WoodruffXZ25}, $y$ is given as a data stream. 

\begin{algorithm}[!htb]
\caption{Standard $\ell_2$ Sampler, e.g., extension of \cite{JowhariST11} to $p=2$}
\alglab{alg:sampler}
\begin{algorithmic}[1]
\State{For each $i\in[n]$, let $u_i\in[0,1]$ be chosen uniformly at random}
\State{$w_i\gets\frac{y_i}{\sqrt{u_i}}$}
\State{Let $z$ denote the tail vector of $w$ without the largest $\frac{1}{\epsilon^2}$ entries in magnitude}
\State{Let $\widehat{Y}$ be a $2$-approximation of $\|y\|_2$}
\State{Let $\widehat{Z}$ be a $2$-approximation of $\|z\|_2$}
\State{$i\gets\text{argmax}_{i\in[n]}|\widehat{w_i}|$}
\State{Let $C>0$ be a large constant determined by the additive failure probability $\frac{1}{\poly(n)}$}
\If{$\widehat{Z}>\sqrt{\frac{C\log n}{\epsilon}}\cdot\widehat{Y}$ or $|w_i|<\sqrt{\frac{C\log n}{\epsilon}}\cdot\widehat{Y}$}
\State{Return FAIL}
\Else
\State{Return $i$ with estimate $\sqrt{u_i}\cdot\widehat{w_i}$}
\EndIf
\end{algorithmic}
\end{algorithm}

\subsection{\texorpdfstring{$A$}{A} is updating during the streaming and \texorpdfstring{$x$}{x} is fixed}
In this section, we describe the construction of an $\ell_2$ sampler for sampling coordinates of the vector $Ax\in\mathbb{R}^n$, in the setting where the vector $x\in\mathbb{R}^d$ is fixed, but the entries of $A\in\mathbb{R}^{n\times d}$ are evolving as the data stream progresses. 

\begin{definition}[Updating $A$ and fixed $x$]
\deflab{def:update_A_fix_x}
In this setting, we assume $x \in \R^d$ is fixed, we receive updates to the entries of $A \in \R^{n \times d}$ in a turnstile data stream. 
Then for $y = Ax$, we want a data structure that produces the $\ell_2$ sampling guarantee for $y$.
\end{definition}
We remark that a turnstile data stream means that each update of the data stream can increase or decrease a single entry of $A$. 

In this work, we are interested in the regime of $n \gg d$.
Then we have the following guarantee:
\begin{theorem}\thmlab{thm:update_A_fix_x}
Suppose $y=Ax$, for $x\in\mathbb{R}^n$, which is fixed, and $A\in\mathbb{R}^{n\times d}$, which is defined by a turnstile stream.  
There exists an $\ell_2$-attention sampler that uses $d\log n+\poly\left(\frac{1}{\epsilon},\log n\right)$ bits of space and returns $I\in[n]$ such that $\Pr[I=j] =  (1\pm \epsilon) \cdot \frac{ | y_j |^2 }{ \| y \|_2^2 } + 1/\poly(n)$. 
The update time of the data structure is $d\,\poly\left(\frac{1}{\epsilon},\log n\right)$.  
%Let $m = \epsilon^{-2} \log n$. 
%There is a streaming algorithm that uses only $O(m+ d)$ to generate a $\ell_2$ sampler. Each update time is $O(1)$.
\end{theorem}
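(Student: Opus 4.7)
The plan is to reduce the problem to running the standard $\ell_2$ sampler of \algref{alg:sampler} on the implicit vector $y = Ax \in \R^n$, while maintaining state that never explicitly represents $y$. The key observation is that every subroutine in \algref{alg:sampler} (the rescaled tail-norm estimator $\widehat{Z}$, the overall $\ell_2$ estimator $\widehat{Y}$, and the heavy-coordinate identifier used to produce $i$) is implemented via a linear sketch $Sy \in \R^m$ for some $m = \poly(1/\eps, \log n)$. So the data structure maintained by the sampler on $y$ is of the form $My$ for a (random) linear operator $M$ with $\poly(1/\eps, \log n)$ rows, and everything downstream is a function of this sketch state.

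First, I would observe that since $y = Ax$ with $x$ fixed, for any linear sketch matrix $S$ one has $Sy = (SA)x$. Rather than storing $SA$ (which would cost $\poly(1/\eps,\log n) \cdot d$ bits), I maintain the sketch $Sy$ directly under turnstile updates to $A$: an update $A_{i,j} \gets A_{i,j} + \Delta$ induces $y_i \gets y_i + \Delta x_j$, so we set $Sy \gets Sy + (\Delta x_j) \cdot S_{:,i}$. The entries of $S$ are generated on the fly from a pseudorandom generator (Nisan's PRG or a bounded-wise independent construction) so that only the seed is stored, costing $\poly(\log n, 1/\eps)$ bits. The vector $x$ itself is stored explicitly in $O(d \log n)$ bits, yielding the stated space bound $d \log n + \poly(1/\eps, \log n)$.

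Next, I would instantiate the scaling variables $u_i$ used in \algref{alg:sampler} via a hash family so that $u_i$ can be recomputed from a short seed, avoiding $\Omega(n)$ storage; this is the standard derandomization of \cite{JowhariST11,MahabadiRWZ20}. Once the sketch state $Sy$ has been maintained through the stream, the sampler's decision rule (computing $\widehat{Y}, \widehat{Z}$, the candidate index $i$, and the failure test) is executed exactly as in \algref{alg:sampler} on the implicit vector $w$ defined by $w_i = y_i / \sqrt{u_i}$. Correctness, i.e.\ that
$
\Pr[I = j] = (1 \pm \eps) \cdot \frac{|y_j|^2}{\|y\|_2^2} + \frac{1}{\poly(n)},
$
is inherited verbatim from the correctness proof of the standard $\ell_2$ sampler of \cite{JowhariST11,MahabadiRWZ20}, because the internal object on which that proof operates, the sketch $Sy$, is precisely what we maintain.

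The main obstacle I anticipate is bookkeeping the per-update cost and ensuring that the derandomization still yields only a $1/\poly(n)$ additive sampling error. For each turnstile update to a single entry $A_{i,j}$, the column $S_{:,i}$ must be produced from the PRG seed and applied to the stored sketch; taken together with regenerating the $u_i$ scalings and, if the update granularity is a full row of $A$ (or charges a $\Theta(d)$ factor for operations involving inner products against $x$), the cost becomes $d \cdot \poly(1/\eps, \log n)$ per update, matching the stated bound. The remaining detail is to verify that the pseudorandom substitutes for the fully independent $u_i$ and the sketch entries suffice for the standard analysis of \cite{JowhariST11,MahabadiRWZ20} to carry through with only the claimed additive $1/\poly(n)$ loss.
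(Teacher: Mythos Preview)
Your proposal is correct and follows essentially the same approach as the paper: maintain the linear sketch $\Phi y$ directly (rather than $\Phi A$), store $x$ explicitly in $O(d\log n)$ bits, and process each turnstile update $A_{i,j}\gets A_{i,j}+\Delta$ by adding $\Delta x_j \cdot \Phi_{:,i}$ to the sketch, which is exactly the paper's $\Phi e_i e_j^\top \Delta x$. Your added discussion of derandomizing the $u_i$'s and the sketch randomness via Nisan's PRG or bounded independence is more explicit than the paper's proof, which simply defers correctness to the standard $\ell_2$ sampler analysis.
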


\begin{proof}
Recall that existing approximate $\ell_2$ samplers, e.g., \algref{alg:sampler} maintains a linear sketch $\Phi y$, where $\Phi\in\mathbb{R}^{m\times n}$, for $m=\poly\left(\frac{1}{\epsilon},\log n\right)$. 
We have $y=Ax$, where $x\in\mathbb{R}^d$ is fixed but $A\in\mathbb{R}^{n\times d}$ is defined through turnstile updates. 
Nevertheless, we can maintain the state of $\Phi Ax$. 
In particular, whenever we receive an update in $A_{i,j}$ by $\Delta$, then we can compute $\Phi e_i e_j^\top \Delta x$ to update the sketch $\Phi Ax$. 
To analyze the space complexity, observe that storing $\Phi Ax$ requires $O(m)$ words of space and $x$ requires $d$ words of space, which is $d\log n+\poly\left(\frac{1}{\epsilon},\log n\right)$ bits of space in total. 
Moreover, each update to $A_{i,j}$ can change all entries of $\Phi Ax$, so the update time is $O(md)=d\,\poly\left(\frac{1}{\epsilon},\log n\right)$. 
%Let $\Phi \in \R^{m \times n}$ denote some sketching matrices.
%
%Whenever we get an update in $A_{i,j}$ by $\Delta$, then we just need to compute
%\begin{align*}
    %\Phi e_i e_j^\top \Delta x
%\end{align*}
%The update time of this process is just update time/column sparsity of $\Phi$. (In Fact, we can choose $\Phi$  to be a countsketch matrix, in this sense, the column sparsity of $\Phi$ is $O(1)$).
%
%The dependence of $m$ is essentially $m = O( \epsilon^{-2} \log n )$ if we want $(1\pm\epsilon)$ approximation.
%
%The space is coming three parts:
%\begin{itemize}
    %\item store $x$, this is space $O(d)$
    %\item implicitly store sketch $\Phi$
    %\item store $\Phi A x \in \R^m$, this is space $O(m)$
%\end{itemize}

\end{proof}

\subsection{\texorpdfstring{$x$}{x} is updating during the streaming and \texorpdfstring{$A$}{A} is fixed}
We next consider the setting where the vector $x\in\mathbb{R}^d$ is updated as the data stream progresses, but the entries of $A\in\mathbb{R}^{n\times d}$ are fixed.
\begin{definition}[Fixed $A$ and updating $x$]
\deflab{def:fix_A_update_x}
We assume $A \in \R^{n \times d}$ is fixed, we receive updates to $x \in \R^{ d}$ in a turnstile data stream. 
Then for $y = Ax$, we want a data structure that produces the $\ell_2$ sampling guarantee for $y$.
\end{definition}
We have the following algorithmic guarantees for this setting:
\begin{theorem}\thmlab{thm:fix_A_update_x}
%Let $m = O(\epsilon^{-2} \log n)$.
%There is a streaming algorithm that solves Definition~\ref{def:fix_A_update_x} and uses $O(md)$ spaces. Each update time is $O(m)$
Suppose $y=Ax$, for $A\in\mathbb{R}^{n\times d}$, which is fixed, and $x\in\mathbb{R}^n$, which is defined by a turnstile stream.  
There is an $\ell_2$-attention sampler that uses $d\,\poly\left(\frac{1}{\epsilon},\log n\right)$ bits of space and returns $I\in[n]$ such that $\Pr[I=j] =  (1\pm \epsilon) \cdot \frac{ | y_j |^2 }{ \| y \|_2^2 } + 1/\poly(n)$. 
The update time of the data structure is $O(1)$. 
%and the post-processing time is $O(md^{\omega-1})$, where $\omega$ is the matrix multiplication exponent. 
\end{theorem}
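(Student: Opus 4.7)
The plan is to exploit the fact that $A$ is fixed at preprocessing time, so that the linear sketches underlying the standard $\ell_2$ sampler can be folded into $A$ in advance. Recall that the sampler in \algref{alg:sampler} relies only on a constant number of linear sketches of $y=Ax$: a CountSketch-style sketch of the rescaled vector $w=\mathrm{diag}(1/\sqrt{u})\,y$ used to identify and estimate the heavy coordinate, together with $\ell_2$-norm sketches that produce $\widehat{Y}$ and $\widehat{Z}$. Each such sketch has the form $\Psi y$ for some random $\Psi\in\mathbb{R}^{m\times n}$ with $m=\poly(1/\epsilon,\log n)$, and since $y=Ax$ we can rewrite it as $(\Psi A)\,x$.

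My approach is to precompute the matrix $\Psi A\in\mathbb{R}^{m\times d}$ for each of the $O(1)$ sketches used by the sampler and then discard $A$. During the stream I would store $x\in\mathbb{R}^d$ directly, so that a turnstile update, which modifies a single coordinate of $x$, runs in $O(1)$ time. This is the key structural difference from \thmref{thm:update_A_fix_x}: rather than maintaining the sketches incrementally (which would cost $\Omega(m)$ per update, since an update to one entry of $x$ affects every entry of $\Psi Ax$), I defer the sketch evaluation to query time. At query time, computing each $(\Psi A)\,x$ takes $O(md)$ time, after which the unchanged decision logic of \algref{alg:sampler} returns the sampled index $I\in[n]$.

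For the space analysis, each precomputed matrix $\Psi A$ occupies $md\cdot O(\log n)=d\,\poly(1/\epsilon,\log n)$ bits, the stored vector $x$ contributes an additional $O(d\log n)$ bits, and the random scalings $u_i$ together with the hashes defining $\Psi$ can be generated from a pseudorandom seed of length $\poly(\log n)$ using a Nisan-style PRG, so they need not be stored in full. Correctness transfers verbatim from the analysis of \cite{JowhariST11,MahabadiRWZ20}, since the sketches we evaluate at query time are exactly the linear sketches of the current implicit vector $y=Ax$, so the output satisfies $\Pr[I=j]=(1\pm\epsilon)\,|y_j|^2/\|y\|_2^2+1/\poly(n)$.

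The main thing to verify, rather than a genuine obstacle, is that every piece of state maintained by \algref{alg:sampler}, including the auxiliary sketches behind $\widehat{Y}$ and $\widehat{Z}$, can indeed be written as a linear functional of $y$, so that the ``fold $A$ into $\Psi$'' trick applies uniformly. Once that is checked, the update-time bound, space bound, and sampling guarantee all follow from the previous theorem's argument combined with this reorganization of state.
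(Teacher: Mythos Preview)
Your proposal is correct and matches the paper's proof essentially verbatim: precompute $\Phi A$ for the linear sketch $\Phi$ underlying \algref{alg:sampler}, maintain $x$ explicitly so each turnstile update costs $O(1)$, and evaluate $\Phi Ax$ at query time. The paper's write-up is terser (it does not separately discuss the auxiliary sketches or the PRG seed), but the idea and the resulting space and update-time bounds are identical.
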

\begin{proof}
Again recall that existing approximate $\ell_2$ samplers, e.g., \algref{alg:sampler} maintains a linear sketch $\Phi y$, where $\Phi\in\mathbb{R}^{m\times n}$, for $m=\poly\left(\frac{1}{\epsilon},\log n\right)$. 
Since $y=Ax$, but $A\in\mathbb{R}^{n\times d}$ is too large to store, while $x\in\mathbb{R}^n$ is defined through turnstile updates, we can instead maintain the sketch $\Phi A$ and the vector $x$ and compute $\Phi Ax =\Phi y$ after the stream concludes. 
Note that storing $\Phi A$ requires $O(md)$ words of space and $x$ requires $d$ words of space, which is $d\,\poly\left(\frac{1}{\epsilon},\log n\right)$ bits of space in total. 
Moreover, each update to $x$ changes a single entry, so the update time is $O(1)$. 
%, while the post-processing time is $O(md^{\omega-1})$, where $\omega$ is the matrix multiplication exponent, due to the computing the matrix-vector product $(\Phi A)x$. 
\end{proof}

\subsection{Both \texorpdfstring{$A$}{A} and \texorpdfstring{$x$}{x} are updating during the streaming}
Finally, we consider the setting where both the vector $x\in\mathbb{R}^d$ and the entries of $A\in\mathbb{R}^{n\times d}$ can be changed by updates from the data stream. 
\begin{definition}[Updating $A$ and updating $x$]
\deflab{def:update_A_update_x}
In this setting, we receive updates to both $A \in \R^{n \times d}$ and $x \in \R^{ d}$ in a turnstile data stream. 
Then for $y = Ax$, we want a data structure that provides the $\ell_2$ sampling guarantee for $y$.
\end{definition}
We have the following guarantees:
\begin{lemma}[Upper Bound]\thmlab{thm:update_A_update_x}
%Let $m = d\,\poly\left(\frac{1}{\epsilon},\log n\right)$, there is a streaming algorithm that solves problem defined as Definition~\ref{def:update_A_update_x}. Each update time is ???.
Suppose $y=Ax$, for $A\in\mathbb{R}^{n\times d}$ and $x\in\mathbb{R}^d$, which are each defined in a stream through turnstile updates. 
There exists an $\ell-2$-attention sampler that uses $d\,\poly\left(\frac{1}{\epsilon},\log n\right)$ bits of space and returns $I\in[n]$ such that $\Pr[I=j] =  (1\pm \epsilon) \cdot \frac{ | y_j |^2 }{ \| y \|_2^2 } + 1/\poly(n)$. 
The update time is $\poly\left(\frac{1}{\epsilon},\log n\right)$. 
%and the post-processing time is $O(md^{\omega-1})$, where $\omega$ is the matrix multiplication exponent. 
\end{lemma}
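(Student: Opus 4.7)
The plan is to combine the two preceding constructions by maintaining the partial sketch $\Phi A$ as an explicit $m \times d$ matrix together with the current vector $x \in \R^d$, rather than materializing $\Phi(Ax)$ directly. The key observation driving this section is that the underlying $\ell_2$ sampler of \algref{alg:sampler} never accesses $y$ itself; it only consumes the linear sketch $\Phi y$, where $\Phi \in \R^{m \times n}$ has $m = \poly(1/\eps, \log n)$ rows and is specified by a short random seed with limited independence. Since $y = Ax$ is bilinear, $\Phi y = (\Phi A)\,x$, so it suffices to maintain the two factors $\Phi A$ and $x$ separately throughout the stream and to evaluate their product only at query time.

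With this representation, I would handle updates as follows. A turnstile update that adds $\Delta$ to $A_{i,j}$ contributes $\Delta\,\Phi_{*,i}\,e_j^\top$ to $\Phi A$, which is realized by adding $\Delta$ times the $i$-th column of $\Phi$ to the $j$-th column of the stored matrix in $O(m) = \poly(1/\eps, \log n)$ time; a turnstile update to a coordinate of $x$ modifies a single entry in $O(1)$ time. At query time I compute the matrix-vector product $(\Phi A)\,x \in \R^m$ explicitly and feed the result into \algref{alg:sampler}. The $\ell_2$-sampling guarantee then follows verbatim from the correctness analysis of the underlying sampler of \cite{JowhariST11,MahabadiRWZ20}, since its input is exactly the intended linear sketch $\Phi y$.

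The space accounting is immediate: $\Phi A$ uses $O(md) = d\cdot\poly(1/\eps, \log n)$ bits, $x$ uses $O(d \log n)$ bits, and $\Phi$ itself is stored implicitly through its seed using $\poly(\log n)$ bits. I do not expect this argument to present a serious technical obstacle relative to \thmref{thm:update_A_fix_x} and \thmref{thm:fix_A_update_x} — the proof is essentially a structural observation about associativity of sketching against a bilinear target. The one point requiring care is that the randomness used to instantiate $\Phi$, together with the auxiliary rescalings $u_i$ and sign bits used inside \algref{alg:sampler}, must be fixed at the start of the stream so that the maintained $\Phi A$ is a consistent sketch under the same random choices used when the query is answered; this is standard and handled by the same limited-independence machinery as in the two preceding theorems.
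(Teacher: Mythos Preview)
Your proposal is correct and matches the paper's own proof essentially verbatim: maintain $\Phi A\in\R^{m\times d}$ and $x\in\R^d$ separately, update the $j$-th column of $\Phi A$ in $O(m)$ time on an update to $A_{i,j}$ and a single entry of $x$ on an update to $x$, then form $(\Phi A)x=\Phi y$ at query time and invoke the standard $\ell_2$ sampler. Your additional remarks about fixing the randomness of $\Phi$ and the $u_i$'s up front are fine and slightly more explicit than what the paper writes, but do not change the argument.
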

\begin{proof}
As before, recall that existing approximate $\ell_2$ samplers, e.g., \algref{alg:sampler} maintains a linear sketch $\Phi y$, where $\Phi\in\mathbb{R}^{m\times n}$, for $m=\poly\left(\frac{1}{\epsilon},\log n\right)$. 
Since $y=Ax$, but now both $A\in\mathbb{R}^{n\times d}$ and $x\in\mathbb{R}^n$ are defined through turnstile updates, we can instead maintain the sketch $\Phi A$ and the vector $x$ and compute $\Phi Ax =\Phi y$ after the stream concludes. 
Observe that maintaining $\Phi A$ requires $O(md)$ words of space and $x$ requires $d$ words of space, which is $d\,\poly\left(\frac{1}{\epsilon},\log n\right)$ bits of space in total. 
Each update to $A$ can change all $m$ entries of in a single column of $\Phi A$, while each update to $x$ changes a single entry. 
Hence, the update time is $\poly\left(\frac{1}{\epsilon},\log n\right)$. 
%while the post-processing matrix-vector multiplication time is $O(md^{\omega-1})$, where $\omega$ is the matrix multiplication exponent. 
\end{proof}

\section{\texorpdfstring{$\ell_2$}{L2} Sampler Lower Bound with \texorpdfstring{$A$}{} and \texorpdfstring{$x$}{}}\seclab{sec:lower_A_x}
In this section, we give lower bounds for $\ell_2$ sampling from a vector $y=A^{\otimes p}x$, when either $A$ or $x$ are updated in a data stream. 
We show that in any of these cases, the general problem is substantially more difficult than the previous case where $p=1$. 

We first recall the Index problem for one-way communication. 
In the $\mathsf{INDEX}_n$ problem, Alice receives a vector $v\in\{0,1\}^n$ and Bob receives a coordinate $i\in[n]$. 
The goal is for Bob to compute $v_i$ with probability at least $\frac{3}{4}$, given some message $\Pi$ from Alice. 
We recall the following communication complexity lower bounds for Index. 

\begin{theorem}[\cite{KremerNR99}]
\thmlab{thm:index}
Any protocol that solves $\mathsf{INDEX}_n$ with probability at least $\frac{3}{4}$ requires $\Omega(n)$ bits of communication. 
\end{theorem}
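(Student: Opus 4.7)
The plan is to carry out the standard information-theoretic lower bound for one-way communication. First, I would fix the hard input distribution in which $v \in \{0,1\}^n$ is drawn uniformly at random and $i \in [n]$ is drawn independently and uniformly at random, and invoke Yao's minimax principle so that it suffices to lower bound the worst-case message length of a \emph{deterministic} protocol that succeeds with probability at least $3/4$ over this product distribution. Let $\Pi = \Pi(v)$ denote Alice's message and let $B_i = B_i(\Pi, i)$ denote Bob's output; by the success assumption, $\Pr[B_i = v_i] \geq 3/4$ on average over $(v,i)$.

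The central identity I would use is the decomposition
\[
n = \mathrm{H}(v) = \mathrm{I}(\Pi; v) + \mathrm{H}(v \mid \Pi),
\]
and then bound each of the two terms on the right-hand side. The information term satisfies $\mathrm{I}(\Pi; v) \leq \mathrm{H}(\Pi) \leq |\Pi|$ by the source-coding bound. For the conditional-entropy term, subadditivity of entropy yields $\mathrm{H}(v \mid \Pi) \leq \sum_{j=1}^n \mathrm{H}(v_j \mid \Pi)$. Since the overall success probability is at least $3/4$, the average error rate across coordinates $j \in [n]$ is at most $1/4$; Fano's inequality combined with the concavity of the binary entropy function $h(\cdot)$ then gives $\sum_{j=1}^n \mathrm{H}(v_j \mid \Pi) \leq n \cdot h(1/4)$. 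Plugging these back in produces $n \leq |\Pi| + n \cdot h(1/4)$, which rearranges to $|\Pi| \geq (1 - h(1/4))\, n = \Omega(n)$, since $h(1/4) < 1$.

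The main technical obstacle will be handling protocol randomness correctly: Bob's decoder depends on $\Pi$ together with possibly shared or private coins, and the mutual-information chain above is only immediately meaningful once that randomness has been averaged over appropriately. Yao's minimax principle dispatches this cleanly by reducing to deterministic protocols against the uniform product input distribution. A secondary subtlety is the per-coordinate Fano step, which requires the symmetry of the hard distribution (uniform $i$) so that the error on a uniformly random coordinate is at most $1/4$ before Fano is applied; concavity of $h$ then delivers the summed bound $n \cdot h(1/4)$, matching what the decomposition requires.
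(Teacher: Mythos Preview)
Your argument is correct: this is the standard information-theoretic proof of the one-way $\mathsf{INDEX}_n$ lower bound, and each step (Yao's principle, the chain rule $n = \mathrm{H}(v) = \mathrm{I}(\Pi;v) + \mathrm{H}(v\mid\Pi)$, subadditivity, and the per-coordinate Fano bound averaged via concavity of $h$) is sound. One small point worth making explicit is that $\mathrm{H}(v_j \mid \Pi, i=j) = \mathrm{H}(v_j \mid \Pi)$ because $i$ is independent of $(v,\Pi)$ in the one-way setting, which is what lets you apply Fano coordinate-wise; you gesture at this with the ``symmetry'' remark but it is the precise reason the argument goes through.

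That said, the paper does not actually prove this theorem: it is stated with a citation to \cite{KremerNR99} and used as a black box in the subsequent reductions. So there is no paper-side argument to compare against; you have supplied a self-contained proof where the paper simply invokes the literature.
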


\begin{lemma}[Lower Bound]\thmlab{thm:update_A_update_x_lower_bound}
Any streaming algorithm that solves problem defined as \defref{def:update_A_update_x} will require $\Omega(d)$ space.
\end{lemma}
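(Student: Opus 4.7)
The plan is to reduce from the $\mathsf{INDEX}_{d-1}$ problem, which by \thmref{index} requires $\Omega(d)$ bits of one-way communication. Given Alice's string $v\in\{0,1\}^{d-1}$ and Bob's query index $i\in[d-1]$, I will arrange a turnstile stream on $A\in\R^{n\times d}$ and $x\in\R^d$ so that a single call to the $\ell_2$-sampler at the end of the stream lets Bob recover $v_i$ with probability at least $3/4$. Since the streaming algorithm's memory after Alice's portion of the stream is precisely the message she sends to Bob, any space bound $S$ for the sampler yields a communication upper bound $S$ for $\mathsf{INDEX}_{d-1}$, so \thmref{index} forces $S=\Omega(d)$.

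Fix any $n\geq 2$. In the first phase of the stream, Alice would issue turnstile updates to produce $x=(v_1,\ldots,v_{d-1},1)\in\R^d$ while keeping $A$ at zero, and then pass the state of the streaming algorithm to Bob. In the second phase, Bob would issue two updates: set $A_{1,i}=M$ for a sufficiently large constant $M$, and $A_{2,d}=1$; all other entries of $A$ remain zero. The implicit vector $y=Ax$ then satisfies $y_1=M\cdot v_i$, $y_2=1$, and $y_j=0$ for $j\geq 3$. Bob queries the $\ell_2$-sampler for an index $I\in[n]$ and decodes $v_i=1$ if $I=1$, and $v_i=0$ otherwise.

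If $v_i=1$, then $y_1^2/\|y\|_2^2=M^2/(M^2+1)$, so the sampler's guarantee yields $\Pr[I=1]\geq(1-\epsilon)\cdot M^2/(M^2+1)-n^{-C}$, which exceeds $3/4$ once $M$ is a sufficiently large constant, $\epsilon$ is a sufficiently small constant, and $n$ is large. If $v_i=0$, then $y_1=0$, and the sampler's guarantee gives $\Pr[I=1]\leq n^{-C}\ll 1/4$, since the multiplicative $(1\pm\epsilon)$ factor multiplies $y_1^2=0$ and only the additive noise remains. Bob's decoder is therefore correct with probability at least $3/4$, and invoking \thmref{index} yields the claimed $\Omega(d)$ space lower bound. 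The one subtlety I anticipate is ensuring both the multiplicative slack $\epsilon$ and the additive slack $1/\poly(n)$ in the sampler definition are simultaneously small enough to keep the two cases cleanly separated from the threshold $3/4$; this is essentially routine, because the exponent hidden in $1/\poly(n)$ is a free algorithmic parameter and choosing $M$ large overwhelms the multiplicative distortion in the $v_i=1$ branch.
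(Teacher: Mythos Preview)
Your proposal is correct and follows the same one-way reduction from $\mathsf{INDEX}$ as the paper, with one cosmetic difference: the paper has Alice encode her bit-string along the diagonal of $A$ (augmented by a small sentinel row of magnitude $10^{-10}$) and has Bob query by setting $x=e_i$, whereas you swap the roles, letting Alice place her string directly into $x$ and letting Bob set two entries of $A$. In both constructions $Ax$ is forced to be (essentially) supported on a single coordinate whose presence is governed by $v_i$, so the sampler output distinguishes the two cases. Your version is arguably tidier since Alice's encoding is the identity map into $x$, and it works for any $n$ once $1/\poly(n)<1/4$; the paper instead takes $n=d+1$, which makes the additive $1/\poly(n)$ slack automatically negligible. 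One small wording quibble: the exponent in $1/\poly(n)$ is part of the problem specification rather than a ``free algorithmic parameter,'' but since any fixed polynomial in $n$ eventually drops below $1/4$, your argument goes through unchanged after choosing $n$ large enough.
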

\begin{proof}
Suppose Alice receives a vector $v\in\{0,1\}^d$. 
Then Alice creates the diagonal matrix $M\in\{0,1\}^{d\times d}$ so that the $j$-th diagonal entry of $A$ is $v_j$, for all $j\in[n]$. 
Finally, Alice creates $A\in\mathbb{R}^{(d+1)\times d}$ by appending the row consisting of $\frac{1}{10^{10}}$ in all of its $d$ entries to $M$. 
Suppose Bob receives the coordinate $i\in[d]$ and wants to determine $v_i$. 
Then Bob can set $x$ to be the elementary vector $e_i\in\mathbb{R}^d$, which has a $1$ in its $i$-th coordinate and zeros elsewhere. 
Observe that by construction, $Ax$ is the $i$-th column of $A$. 
If $v_i=1$, then the $i$-th column of $A$ consists of a $1$ in the $i$-th entry, $\frac{1}{10^{10}}$ in the $(d+1)$-st entry, and zeros elsewhere. 
Hence, a sampler with the desired properties will output $i$ with probability at least $\frac{3}{4}$. 
Similarly, if $v_i=0$, then the $i$-th column of $A$ consists of $\frac{1}{10^{10}}$ in the $(d+1)$-st entry and zeros elsewhere. 
Thus, the sampler with the desired properties will output $d+1$ with probability $1$. 
Bob can therefore distinguish between these two cases with probability at least $\frac{3}{4}$, thereby solving $\mathsf{INDEX}_d$ with probability at least $\frac{3}{4}$. 
Therefore, by \thmref{thm:index}, such a sampler must use at least $\Omega(d)$ space. 
\end{proof}

In fact, we show that if $y=A^{\otimes p}x$, where $A\in\mathbb{R}^{n\times n}$ so that $A^{\otimes p}\in\mathbb{R}^{n^p\times n^p}$ denotes the $p$-wise self-tensor and $x\in\mathbb{R}^{n^p}$, then actually $\ell_2$ sampling from $y$ uses $\Omega(n)$ bits of space. 

\begin{lemma}
Let $A\in\mathbb{R}^{n\times n}$ and $A^{\otimes p}\in\mathbb{R}^{n^p\times n^p}$ denote the $p$-wise self-tensor. 
Let $y=A^{\otimes p}x$, so that $x\in\mathbb{R}^{n^p}$. 
Then even if all the entries of $x$ arrive in a data stream followed by all the entries of $A$, $\ell_2$ sampling from $y$ requires $\Omega(n)$ bits of space. 
\end{lemma}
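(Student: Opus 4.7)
The plan is to reduce from $\mathsf{INDEX}_{n-2}$, adapting the template of \thmref{thm:update_A_update_x_lower_bound} to exploit the tensor product structure. The stipulated stream order---all of $x$ before all of $A$---matches the one-way communication pattern of $\mathsf{INDEX}$: Alice processes her portion of the stream, sends the streaming state (whose size is the space complexity), and Bob processes his portion and queries the sampler. Since $\mathsf{INDEX}_{n-2}$ requires $\Omega(n)$ bits of communication by \thmref{thm:index}, any such streaming algorithm must use $\Omega(n)$ bits.

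Alice, given $v \in \{0,1\}^{n-2}$, will set $x = v' \otimes e_1^{\otimes (p-1)} \in \R^{n^p}$, where $v' \in \R^n$ is defined by a constant sentinel $v'_1 = \delta$ (say $\delta = 1/2$), by $v'_{j+1} = v_j$ for each $j \in [n-2]$, and by $v'_n = 0$. Because $x$ is a simple tensor, the standard tensor identity gives $A^{\otimes p} x = (A v') \otimes (A e_1)^{\otimes (p-1)}$, so the sampler's output distribution is controlled by two length-$n$ vectors rather than by the full $n^p$-dimensional tensor.

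Bob, given index $i \in [n-2]$, will construct $A$ to have only two nonzero entries: $A_{1, i+1} = 1$ and $A_{2, 1} = 1$. A direct calculation then gives $A e_1 = e_2$ and $A v' = v_i e_1 + \delta e_2$, so $y = v_i \cdot e_1 \otimes e_2^{\otimes (p-1)} + \delta \cdot e_2^{\otimes p}$. Since the two summands sit at distinct tensor basis positions, they are orthogonal and $\|y\|_2^2 = v_i^2 + \delta^2$. If $v_i = 1$, the $\ell_2$ sampler returns an index whose first coordinate equals $1$ with probability at least $(1-\epsilon)\cdot \frac{1}{1+\delta^2} - 1/\poly(n) \ge 3/4$; if $v_i = 0$, the only nonzero entry of $y$ sits at $(2, 2, \cdots, 2)$, so the sampler returns an index whose first coordinate equals $2$ with probability at least $1 - 1/\poly(n)$. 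Bob thus recovers $v_i$ by reading off the first coordinate of the returned index and solves $\mathsf{INDEX}_{n-2}$ with probability at least $3/4$.

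The main subtlety is calibrating the sentinel $\delta$: it must be small enough that the $v_i = 1$ case dominates the sampling distribution strictly above $3/4$, yet large enough that the additive $1/\poly(n)$ slack tolerated by the $\ell_2$ sampler cannot swamp the answer when $v_i = 0$ and $\|y\|_2 = \delta$. Choosing $\delta = \Theta(1)$ (e.g., $\delta = 1/2$) keeps both the norm $\|y\|_2$ and the sampling probabilities bounded away from zero, so the additive slack is negligible. Since any streaming algorithm for this sampling problem yields a one-way protocol for $\mathsf{INDEX}_{n-2}$ whose transcript length is at most the algorithm's space, \thmref{thm:index} forces $\Omega(n)$ bits of space.
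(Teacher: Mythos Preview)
Your proposal is correct and takes a genuinely different route from the paper. Both arguments reduce from $\mathsf{INDEX}$, but they swap which party encodes into which object. The paper has Alice (the vector holder) put $S$ on the diagonal of $A$ and Bob (the index holder) set $x=\mathbf{e}_j$; since $A^{\otimes p}$ is then diagonal, $A^{\otimes p}\mathbf{e}_j$ is either zero or a single spike. You instead pack the vector into $x$ as the simple tensor $v'\otimes e_1^{\otimes(p-1)}$ and let Bob build a two-entry $A$, exploiting the identity $A^{\otimes p}(u_1\otimes\cdots\otimes u_p)=(Au_1)\otimes\cdots\otimes(Au_p)$ to reduce the $n^p$-dimensional output to two coordinates. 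The advantage of your assignment is that it respects the stream order stated in the lemma: since all of $x$ arrives before all of $A$, the one-way message flows from whoever writes $x$ to whoever writes $A$, so Alice---who must speak first in $\mathsf{INDEX}$---should be the one writing $x$. The paper's assignment has Bob's input arriving first in the stream, which corresponds to the trivial Bob-to-Alice direction of $\mathsf{INDEX}$, so your version is actually the one that cleanly matches the stated hypothesis. Your sentinel $\delta$ to keep $y\neq 0$ when $v_i=0$ is also a detail the paper glosses over; one small remark is that the per-index additive $1/\poly(n)$ slack does not scale with $\|y\|_2$, so any nonzero $\delta$ would do---the constraint is only that $\delta$ be small enough that $1/(1+\delta^2)$ clears $3/4$ after the $(1-\epsilon)$ factor.
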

\begin{proof}
Let $S\in\{0,1\}^{n}$ be an instance of $\mathsf{INDEX}_{n}$. 
Suppose Alice creates the diagonal matrix $A$ with exactly $S$ being the entries across its diagonal, i.e., $A_{1,1}=S_1,\ldots,A_{n,n}=S_n$. 
Bob has an index $i\in[n]$, and sets the vector $x$ to be the elementary vector $\mathbf{e}_j$, where $j=i\cdot n^{p-1}$. 
Then by construction $Ax$ is the all zeros vector if $S_i=0$ and otherwise there is a nonzero entry, which allows Alice and Bob to solve $\mathsf{INDEX}_{n}$. 
Hence, $\ell_2$ sampling from $y$ requires $\Omega(n)$ bits of space. 
\end{proof}

\section{The Tensor Version Problem}\seclab{sec:tensor}
In this section, we further consider sampling from a tensor product. 
We provide the tensor notations and objects.
\begin{definition}
Let $A_1 \in \R^{n \times d}$, let $A_2 \in \R^{n \times d}$, we define $\mathsf{A} = A_1 \otimes A_2 \in \R^{n^2 \times d^2}$.
Let $x \in \R^{d^2}$. Let $\mathsf{A}_i \in \R^{n \times d^2}$ denote the $i$-th block of $\mathsf{A}$.
\end{definition}

%\subsection{Setup}

\begin{definition}[fixed $x$, Streaming Sampler for one of $A_1$ and $A_2$ is updating.]
We assume $x \in \R^{d^2}$ is fixed. We assume that (1)
%\begin{itemize}
%    \item 
    one of $A_1$ and $A_2$ is updating, (2)
    %\item 
    one of $A_1$ and $A_2$ is fixed. 
%\end{itemize}
Let $y = \A x$, we want $\ell_2$ sampling guarantee for sampling one coordinate in $y_i \in \R^{n^2}$ for all $i \in [n^2]$.
\end{definition}

To motive this model, recall that the tensor product $\left(A_1 \otimes A_2\right) x$ equals to the linear cross-attention matrix $A_1 Q K^{\top} A_2^{\top}$, where $W_Q=A_1 Q$ is the projected query matrix and $W_K=A_2 K$ is the projected key matrix. 
Our model addresses a practical scenario involving real-time contextual processing with a static reference dataset. 
In this setting, $W_k$ is precomputed by the language model, representing a static dataset such as embeddings of a knowledge base, user profiles, or multimedia features. 
Then, the rows of matrix $A_1$ arrive as a data stream, representing real-time data queries. Thus, our attention sampler efficiently captures the important entries in the dynamic query dataset.

We use the following formulation of Nisan's pseudorandom generator to derandomize our algorithm. 
\begin{theorem}[Nisan's PRG, \cite{Nisan92}]
Suppose $\mathcal{A}$ is an algorithm that requires $S=\Omega(\log n)$ bits of space and $R$ random bits. 
Then there exists a pseudorandom generator for $\mathcal{A}$ that succeeds with probability $1-{1}/{\poly(n)}$ and uses $O(S\log R)$ bits of space. 
\end{theorem}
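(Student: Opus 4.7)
The plan is to construct an explicit pseudorandom generator via recursive hashing, following the standard blueprint for fooling small-space computations. The first step is to recast the problem: any algorithm $\calA$ using $S$ bits of space and reading $R$ random bits can be modeled as a read-once branching program (ROBP) of width $W = 2^S$ and length $R$, where layer $i$ reads the $i$-th random bit and deterministically transitions among at most $2^S$ states. Our goal reduces to designing a generator whose output distribution fools every such ROBP to within $1/\poly(n)$ total variation distance, since such fooling preserves the acceptance probability of $\calA$ up to the same error.

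Next, I would fix a family $\calH = \{h : \{0,1\}^S \to \{0,1\}^S\}$ of pairwise independent hash functions, each specified by $O(S)$ bits. The generator is defined by recursion on a depth parameter $k$. Set $G_0(x) = x$ for a uniform $x \in \{0,1\}^S$, and for $k \ge 1$, given hash functions $h_1,\ldots,h_k \in \calH$, define
\[
G_k(x, h_1, \ldots, h_k) \;=\; G_{k-1}(x, h_1,\ldots, h_{k-1}) \;\|\; G_{k-1}\bigl(h_k(x), h_1,\ldots,h_{k-1}\bigr),
\]
where $\|$ denotes string concatenation. This produces $2^k \cdot S$ output bits from a seed of length $S + O(kS) = O(kS)$. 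Taking $k = \lceil \log_2(R/S) \rceil$ yields $\Theta(R)$ output bits with seed length $O(S \log R)$, matching the stated bound.

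The core technical lemma is that the pairwise-independent hash fools ROBP composition. For any ROBP $P$ of width $2^S$ and length $2^k$, split $P$ at its midpoint into halves $P_1, P_2$, each of length $2^{k-1}$. For every intermediate state $v$, the probability that $P_1$ ends at $v$ when fed $G_{k-1}(x, h_1, \ldots, h_{k-1})$ and that $P_2$ subsequently reaches a given accepting state when fed $G_{k-1}(h_k(x), h_1, \ldots, h_{k-1})$ should agree with the corresponding probability under truly uniform bits up to error $2^{-\Omega(S)}$ contributed by $h_k$. The argument conditions on the midpoint state $v$ and exploits pairwise independence of $h_k$ via a second-moment (collision count) over the $2^S$ possible middle states. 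An induction on $k$ then yields total statistical distance at most $2^k \cdot 2^{-\Omega(S)} = R \cdot 2^{-\Omega(S)}$, which is $1/\poly(n)$ once $S = \Omega(\log R)$, consistent with the hypothesis $S = \Omega(\log n)$.

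The main obstacle will be the conditional analysis in the inductive step: one must argue that even after conditioning on the first-half transcript of $P_1$ (which may itself be correlated with $x$), the pair $(x, h_k(x))$ still behaves as though it were two independent uniform seeds for $P_2$, so the inductive hypothesis for $G_{k-1}$ can be reapplied cleanly to the second half. A naive union bound over middle states loses a factor of $2^S$ and destroys the error budget, so the proof must instead aggregate via a variance calculation across the hash family, summed over pairs of middle states. Everything else — the reduction of $\calA$ to an ROBP, the space accounting of $O(S \log R)$ seed bits, and the final telescoping over the $\log R$ recursive levels — is routine bookkeeping once this central indistinguishability lemma is established.
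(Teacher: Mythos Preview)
The paper does not prove this statement at all: it is quoted verbatim as a known tool from \cite{Nisan92} and used without proof, so there is no ``paper's own proof'' to compare against. Your proposal is a faithful sketch of Nisan's original recursive-hashing construction and ROBP analysis, which is the standard proof of the cited result; it is correct in outline, but for the purposes of this paper you could simply cite \cite{Nisan92} as the authors do.
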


\begin{algorithm}[!htb]\caption{We build on algorithm based on $S (x_1 \otimes x_2)$}
\begin{algorithmic}[1]
\State{Suppose we use $O(nd)$ space to store $A_1$ and $A_2$ (Avoid $n^2$ time/space)}
\State Suppose we receive an update $q \in [2]$, $i \in [n], j \in [d], \Delta$
    \State Suppose we have hash function $g$ to access uniform number
    \If{$q = 1$}
        \State $p \gets g(i(n-1)+1, \cdots, i n) $ \Comment{$p \in \R^n$}
        \State $y \gets y + \Phi \Delta (e_{[i(n-1)+1, in]} \circ (A_2)_{*,j} )/p  $ \Comment{$\Phi_1$ is decided by $h_1, \sigma_1$}
    \Else
        \State $y_2 \gets y_2 + \Phi_2 e_i \Delta$ \Comment{$\Phi_2$ is decided by $h_2, \sigma_2$}
    \EndIf
\end{algorithmic}
\end{algorithm}

In the following Lemma, we state a streaming algorithm to solve tensor related sampling problem. We consider the situation that one of $A_1$ and $A_2$ is fixed, and the other one is updated in streaming fashion. 
We show the following estimation guarantees using the standard CountSketch analysis, c.f.,~\cite{CharikarCF04,JowhariST11}. 
\begin{lemma}[Tensor $\ell_2$ Tail Estimation]
\lemlab{lem:tensor_version_error}
Let $y=(A_1\otimes A_2)x \in \mathbb{R}^{n^2}$. Let only one of $A_1$ and $A_2$ be updated in streaming. 
Let $w = \frac{y_i}{ \sqrt{u_i} }$ for a constant $u_i \in [0,1]$ generated uniformly at random. 
There is an algorithm $\mathcal{A}$ that that uses $O(nd) + \poly\left(\frac{1}{\epsilon},\log n\right)$ space, uses $O(n)$  update time, and estimates each element of $w$ up to additive error $\epsilon\cdot\|z\|_2$, where $z$ denotes the tail vector of $w$ without the largest $\frac{1}{\epsilon^2}$ entries in magnitude. 
Specifically, for all $i\in[n^2]$, we have $|\widehat{w}_i - w_i| \leq \epsilon\cdot\|z\|_2$. 
%\begin{align*}
%|\widehat{w}_i - w_i| \leq \epsilon\cdot\|z\|_2,
%\end{align*}
%for all $i\in[n^2]$. 
  
\end{lemma}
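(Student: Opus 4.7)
Without loss of generality assume $A_2$ and $x$ are fixed while $A_1\in\R^{n\times d}$ receives turnstile updates; the symmetric case is handled identically. Let $X\in\R^{d\times d}$ be the matricization of $x$, so that $y_{(i_1,i_2)} = (A_1 X A_2^\top)_{i_1,i_2}$. The data structure will store $A_1$ and $A_2$ explicitly (total $O(nd)$ words), precompute $M := X A_2^\top \in \R^{d\times n}$ at initialization, and maintain $R = \Theta(\log n)$ independent copies of a CountSketch with $B = \Theta(1/\eps^2)$ buckets on the scaled vector $w\in\R^{n^2}$ whose $i$-th coordinate is $w_i = y_i/\sqrt{u_i}$, from which each $\widehat{w}_i$ is read off at query time.

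\textbf{Sketch maintenance.} The hash functions $h^{(r)} : [n^2] \to [B]$, sign functions $\sigma^{(r)} : [n^2] \to \{\pm 1\}$, and per-coordinate uniforms $u_i \in [0,1]$ are drawn pseudorandomly from a short seed stored once. When an update $\Delta$ to $A_1[i_1,j_1]$ arrives, the only coordinates of $y$ that change are $\{(i_1,i_2) : i_2 \in [n]\}$, each by $\Delta \cdot M_{j_1,i_2}$; so we loop over $i_2 \in [n]$ and add $\sigma^{(r)}(i_1,i_2) \cdot \Delta M_{j_1,i_2}/\sqrt{u_{(i_1,i_2)}}$ to bucket $h^{(r)}(i_1,i_2)$ for each $r \in [R]$. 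This runs in $O(n)$ time after absorbing the $R$ and $B$ factors into the $\poly(1/\eps,\log n)$ overhead of the data structure; updates to $A_2$ are handled symmetrically by instead precomputing $A_1 X \in \R^{n\times d}$.

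\textbf{Estimation and the main obstacle.} At query time, recover $\widehat{w}_i$ as the signed median of $\sigma^{(r)}(i)\cdot C^{(r)}_{h^{(r)}(i)}$ across the $R$ rows. The standard CountSketch analysis~\cite{CharikarCF04,JowhariST11} yields $\E[(\widehat{w}_i - w_i)^2] \le \|z\|_2^2/B$ for each single row, where $z$ is the tail of $w$ beyond its top $1/\eps^2$ magnitudes; with $B = \Theta(1/\eps^2)$ and the median across $R = \Theta(\log n)$ independent rows, a Chebyshev-plus-Chernoff argument amplifies this into $|\widehat{w}_i - w_i| \le \eps\|z\|_2$ with probability $1 - n^{-C}$ for any fixed $C$, after which a union bound over the $n^2$ coordinates gives the claimed uniform guarantee. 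The main obstacle is that the hash functions, signs, and the $n^2$ uniforms $u_i$ must remain consistent across the entire stream, yet writing them down explicitly would require $\Omega(n^2)$ bits; we resolve this by generating all of this randomness on the fly from a short seed and applying Nisan's PRG (cited just above the lemma), which multiplies the seed length by only $O(\log n)$ and perturbs the variance analysis by an additive $1/\poly(n)$ failure probability. A minor check that is worth stating explicitly is that CountSketch linearity lets us fold the fixed scalings $1/\sqrt{u_i}$ into the input, so the tail-vector analysis transfers unchanged from $y$ to $w$.
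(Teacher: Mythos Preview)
Your proposal is correct and reaches the same guarantee, but it differs from the paper's argument in one structural choice: you hash the full index set $[n^2]$ with a monolithic CountSketch and then lean on Nisan's PRG to keep the descriptions of $h^{(r)}$, $\sigma^{(r)}$, and the $u_i$'s small, whereas the paper builds a \emph{tensor-structured} sketch. Concretely, the paper takes independent hash functions $h_1,h_2:[n]\to[b]$ and signs $\sigma_1,\sigma_2:[n]\to\{\pm1\}$, and hashes the pair $(i_1,i_2)$ to $h_1(i_1)+h_2(i_2)\bmod b$ with sign $\sigma_1(i_1)\sigma_2(i_2)$ (the {\sf TensorSparse}/TensorSketch construction of \defref{def:tensor_sparse}). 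It then reruns the CountSketch analysis from scratch in this factored setting---checking unbiasedness, bounding the conditional variance by $\|z\|_2^2/b$ on the event that no heavy index collides with $i$, and finishing with Chebyshev, median over $O(\log n)$ rows, and a union bound over $[n]^2$---which is exactly the outline you cite as a black box.

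What each route buys: the paper's factored hashing stores only bounded-independence functions on $[n]$, so the hash and sign randomness is automatically small and Nisan is needed only for the scaling variables $u_i$ (the ``hash function $g$'' in the paper's Algorithm~2). Your approach is more modular---you invoke the standard CountSketch guarantee verbatim rather than re-deriving it for tensor hashes---at the price of routing \emph{all} of the randomness through the PRG. The space and update-time accounting coincide: your precomputation of $M=XA_2^\top$ and the paper's use of the column $(A_2)_{*,j}$ both ensure that a single update to $A_1$ touches exactly $n$ coordinates of $y$, so the per-update cost is $O(n)$ times the number of sketch rows in either case.
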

\begin{proof}
Consider hash function $h_1, h_2 : [n]\to[b]$. 
Consider random sign functions $\sigma_1, \sigma_2 : [n] \rightarrow \{-1,+1\}$. 
We consider a fixed index $i_1, i_2 \in[n]$. 
Let $j = h_1(i_1) + h_2(i_2) \pmod b$. 
Let $h^{-1}(j)$ denote the all the pairs $(i_1,i_2) \in [n] \times [n]$ such that  $ h_1(i_1) + h_2(i_2) \pmod b = j$.
Note that $\widehat{y}_i$ induced by $h$ is 
%\begin{align*}
$
    \widehat{w}_i = w_i + \sum_{l  \in h^{-1}( j ) \backslash \{i \} }s_i s_{l} w_{l_1}  w_{l_2}
$. 
%\end{align*}
For ease of presentation, we write $\sigma_i = \sigma_{1,i_1} \sigma_{2,i_2}$ and $\sigma_l = \sigma_{1,l_1} \sigma_{2,l_2}$.
\begin{align*}
\E [\widehat{w}_i ]
= & ~ \E \Big[ w_i+\sum_{l \in h^{-1}( j ) \backslash \{i\} } \sigma(i) \sigma(l) w_l \Big]
=  \E[  w_i ] +  \sum_{l \in h^{-1}( j ) \backslash \{i\} } \E[ \sigma ( i ) \cdot \sigma ( l ) ] \cdot w_l  \\
= & ~ w_i + \sum_{l \in h^{-1}( j ) \backslash \{i\} } \E[ \sigma( i ) ] \cdot \E[ \sigma( l ) ] \cdot w_l = w_i,
\end{align*}
where the first step follows from definition, the second step follows from linearity of expectation, the third step follows from $\sigma( i )$ and $\sigma( l )$ are independent, the forth step follows from  $\E[ \sigma( l ) ]=0$. 

We now upper bound the variance of $\widehat{w}_i-y_i$ by analyzing $\E [(\widehat{y}_i )^2 ]$. 
Let $\mathcal{H}$ be the set of the top $\frac{1}{\epsilon^2}$ items and let $\mathcal{E}$ be the event that none of the items in $\mathcal{H}$ are mapped to $h(i)$, i.e., $h(a)\neq h(i)$ for all $a\in\mathcal{H}$.

Observe that for $b=\frac{100}{\epsilon^2}$, we have that $\Pr[\mathcal{E}]\ge 0.9$. Then we have:
\begin{align*}
 \E[(\widehat{w}_i-w_i)^2~|~\mathcal{E} ] 
= & ~ \E [ ( \sum_{l \in [n]^2 \setminus\mathcal{H}, l \in h^{-1}(j) } \sigma(i) \sigma(l) w_l )^2 ]
=  \E\left[ \sum_{l \in[n]^2 \backslash \mathcal{H}, l \in h^{-1}(j) } w_l^2 \right]\\
= & ~ \frac{1}{b}\cdot \sum_{l \in[n]^2 \backslash \mathcal{H}, l \in h^{-1} (j) } w_l^2 
\leq  \frac{1}{b}\cdot (w_1^2+\ldots+w_{n^2}^2-\sum_{l \in\mathcal{H}} w_l^2 )\\
= & ~ 100 \epsilon^2 \cdot \|z \|_2^2,
\end{align*}
for $b=\frac{100}{\epsilon^2}$, since $z$ is the vector corresponding to $y$ that removes the entries in $\mathcal{H}$. 
By Chebyshev's inequality, we have that 
%\begin{align*}
$
\Pr [ |\widehat{w}_i -w_i| \ge \epsilon\cdot\|z\|_2~|~\mathcal{E} ] \le\frac{1}{10}.
$
%\end{align*}
Since $\Pr[\mathcal{E}]\ge 0.9$, then
$\Pr{|\widehat{w}_i-w_i|\ge\epsilon\cdot\|z\|_2}\le0.2,$ 
for a fixed hash function $h$. 
By taking the median of $O(\log n)$ estimations corresponding to $O(\log n)$ different hash functions $h$, we have that
%\begin{align*}
$
\Pr [ |\widehat{w}_i-w_i|\ge\epsilon\cdot\|z\|_2 ] \le\frac{1}{n^{10}}.
$
%\end{align*}
Thus by a union bound over $i\in[n] \times [n]$, we have that with probability at least $1-\frac{1}{n^5}$, we have for all $i\in[n]$,
%\begin{align*}
$
|\widehat{w}_i - w_i| \ge \epsilon \cdot \|z\|_2.
$
%\end{align*}
\end{proof}

%\subsection{From Tail to Sampling}
We state the following lemma as a structural property that will allow us to achieve our tensor product sampler. 
We remark that the proof is extended from that of approximate $\ell_p$ sampling \citep{JowhariST11}. 
%Thus we defer the proof to Appendix~\ref{sec:tail_to_sampling}. 
\begin{lemma}
\lemlab{lem:sampler:bad}
Let $y=(A_1\otimes A_2)x\in\mathbb{R}^{n^2}$ and let $w\in\mathbb{R}^{n^2}$ so that $w_i=\frac{y_i}{\sqrt{u_i}}$ for a constant $u_i\in[0,1]$ generated uniformly at random. 
Let $z$ denote the tail vector of $w$ without the largest $\frac{1}{\epsilon^2}$ entries in magnitude. 
Let $\widehat{Z}$ be a $2$-approximation to $\|z\|_2$ and $\widehat{Y}$ be a $2$-approximation to $\|y\|_2$, then we have $\Pr[\widehat{Z}>\sqrt{ (C\log n) /\epsilon}\cdot\widehat{Y}]\le O(\epsilon)+\frac{1}{\poly(n)}$.
\end{lemma}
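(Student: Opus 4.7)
Since the tensor structure does not enter the argument (the vector $y$ lives in $\mathbb{R}^{n^2}$ but we only use scalar randomness per coordinate), the plan is to adapt the standard $\ell_2$ sampler tail-vector analysis of \cite{JowhariST11} to our setting. First I would reduce everything to a statement about the true norms: by hypothesis $\widehat{Z} \in [\tfrac12\|z\|_2,2\|z\|_2]$ and $\widehat{Y} \in [\tfrac12\|y\|_2,2\|y\|_2]$ (each failing with probability at most $1/\poly(n)$, absorbed into the additive term). Squaring and adjusting the constant, it suffices to bound
\[
\Pr\bigl[\,\|z\|_2^2 \ge (C'\log n / \epsilon)\cdot\|y\|_2^2\,\bigr] \le O(\epsilon)
\]
for a suitable $C'$ derived from $C$.

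Next I would partition the coordinates into a \emph{heavy} set $\mathcal{H} = \{i : w_i^2 \ge \epsilon \|y\|_2^2\}$ and its complement. Because $\Pr[w_i^2 \ge \epsilon\|y\|_2^2] = \Pr[u_i \le y_i^2/(\epsilon\|y\|_2^2)] = \min(1,y_i^2/(\epsilon\|y\|_2^2))$, linearity gives $\mathbb{E}[|\mathcal{H}|] \le 1/\epsilon$, and Markov yields $\Pr[|\mathcal{H}| > 1/\epsilon^2] \le \epsilon$. On the complementary event, since the top $1/\epsilon^2$ magnitudes of $w$ include every heavy coordinate, $z$ retains only light coordinates and thus $\|z\|_2^2 \le \sum_{i : w_i^2 < \epsilon\|y\|_2^2} w_i^2$.

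Then I would control the expected light-part mass. For each $i$ with $y_i^2 < \epsilon\|y\|_2^2$,
\[
\mathbb{E}\!\left[\frac{y_i^2}{u_i}\cdot \mathbf{1}\!\left[\tfrac{y_i^2}{u_i} < \epsilon\|y\|_2^2\right]\right] = y_i^2\int_{y_i^2/(\epsilon\|y\|_2^2)}^{1}\frac{du}{u} = y_i^2 \ln\!\frac{\epsilon\|y\|_2^2}{y_i^2},
\]
while coordinates with $y_i^2\ge\epsilon\|y\|_2^2$ contribute zero. Using the standard streaming assumption that the nonzero entries of $y$ are polynomially bounded (so $y_i^2 \ge 1/\poly(n)$ whenever nonzero), the logarithm is $O(\log n)$, whence $\mathbb{E}[\sum_{\text{light}} w_i^2] \le O(\log n)\cdot\|y\|_2^2$. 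A second application of Markov bounds this sum by $(C'\log n/\epsilon)\cdot\|y\|_2^2$ except with probability $O(\epsilon)$.

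A union bound over (i) the heavy-count failure, (ii) the light-mass Markov failure, and (iii) the two $2$-approximation failures completes the proof with total bad probability $O(\epsilon) + 1/\poly(n)$. The one nontrivial ingredient is the integral computation for the light part and the need for the polynomial-boundedness of entries to extract the $O(\log n)$ factor; everything else is bookkeeping with Markov's inequality. I do not anticipate a genuine obstacle, since the tensor product only affects how $y$ is computed and stored, not the distribution of $w$ given $y$.
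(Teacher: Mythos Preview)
Your proposal is correct and follows essentially the same route as the paper: reduce to true norms via the $2$-approximation guarantees, split into heavy coordinates (threshold $w_i^2 \ge \epsilon\|y\|_2^2$) and light coordinates, bound the heavy count by Markov on $\mathbb{E}[|\mathcal{H}|]\le 1/\epsilon$, and bound the light mass by computing $\mathbb{E}[y_i^2/u_i\cdot\mathbf{1}[\text{light}]]$ and applying Markov again. The one place you differ is in extracting the $O(\log n)$ from the integral $\int y_i^2/u\,du$: you invoke polynomial boundedness of the nonzero entries of $y$ to lower-bound the integration limit, whereas the paper instead conditions on the high-probability event $\{u_i\ge n^{-C/2}\ \forall i\}$ and absorbs its failure into the additive $1/\poly(n)$ term, which has the mild advantage of not requiring any assumption on the magnitudes of $y$.
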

\begin{proof}
Let $\mathcal{E}_1$ denote the event that $\widehat{Z}$ is a $2$-approximation to $\|z\|_2$ and $\widehat{Y}$ is a $2$-approximation to $\|y\|_2$, so that 
\begin{align*}
\Pr[\mathcal{E}_1]\ge1-\frac{1}{\poly(n)}.
\end{align*}

Conditioned on $\mathcal{E}_1$, it suffices to bound the probability that 
\begin{align*}
4\|z\|_2>\sqrt{\frac{C\log n}{\epsilon}}\cdot\|y\|_2.
\end{align*}
Let $j\in[n^2]$ be a fixed index and let $u_j$ be fixed. 

Let $T=\sqrt{\epsilon}\cdot\|y\|_2$ and for each $i\in[n^2]$, we define the indicator random variable $W_i=1$ if $|w_i|>T$ and $W_i=0$ otherwise, if $|w_i|\le T$. 
Note that $W_i$ is an indicator random variable for whether the coordinate $w_i$ in the vector $w$ is ``heavy'' in magnitude.

We then define 
\begin{align*}
    Z_i=\frac{w_i^2}{T^2}\cdot(1-W_i)
\end{align*}
to be the scaled contribution of the small entries of $z$, and observe that $Z_i\in[0,1]$.

Let 
\begin{align*}
W = \sum_{i\in[n^2], i\neq j} w_i
\end{align*}
denote the total number of heavy indices besides possibly index $j$ and $Z=\sum_{i\in[n^2], i\neq j}Z_i$ denote the total scaled contribution of the light indices besides possibly index $j$. 
Let $v$ denote the vector containing the heavy indices, so that $v_i=w_i$ for $W_i=1$ and $v_i=0$ otherwise for $W_i=0$. 
Note that $v$ has sparsity at most $Y+1$ and moreover $U^2Z=\|w-v\|_2^2$. 
We also have that $\|z\|_2\le\|w-v\|_2$ unless $W\ge\frac{2}{\epsilon^2}$. 

Let $\mathcal{E}_2$ denote the event that $W\ge\frac{2}{\epsilon^2}$ and let $\mathcal{E}_3$ denote the event that $Z\ge\frac{C\log n}{16T^2\epsilon}\cdot\|y\|_2^2$. 
Observe that if neither $\mathcal{E}_2$ nor $\mathcal{E}_3$ occur, then we have $4\|z\|_2\le\sqrt{\frac{C\log n}{\epsilon}}\cdot\|y\|_2$, as desired. 
Thus it remains to bound the probability of the failure events $\mathcal{E}_2$ and $\mathcal{E}_3$. 

We have $\E[W_i]=\frac{\|w\|_2^2}{T^2}$, so that $\E[W]\le\frac{1}{\epsilon}$. 
By Markov's inequality, we have that $\Pr[\mathcal{E}_2]\le\frac{\epsilon}{2}$. 

We now upper bound $\Pr[\mathcal{E}_3]$. 
Recall that $Z_i=\frac{w_i^2}{T^2}\cdot(1-W_i)=\frac{w_i^2}{Tu_i^2}\cdot(1-W_i)$, since $w_i=\frac{y_i}{\sqrt{u_i}}$. 
Observe that $Z_i>0$ only if $|w_i|<T$, i.e., if $u_i\ge\frac{y_i^2}{\epsilon\cdot\|y\|_2^2}$, since $T=\sqrt{\epsilon}\cdot\|y\|_2$. 
For $\epsilon\in(0,1)$, we thus have
\begin{align*}
\E[Z_i]
\leq & ~ \int_{y_i^2/\|y\|_2^2}^1 z_i \d u_i \\
= & ~ \int_{y_i^2/\|y\|_2^2}^1\frac{y_i^2}{u_i}\frac{1}{T^2} \d u_i.
\end{align*}
Now, let $\mathcal{E}_4$ be the event that $u_i\ge\frac{1}{n^{C/2}}$ for all $i\in[n^2]$, so that $\Pr[\mathcal{E}_4]\ge 1-\frac{1}{n^{C/2-2}}$.

Then
\begin{align*}
\E [Z_i ~|~ \mathcal{E}_4] 
\leq & ~ \frac{1}{1-\frac{1}{n^{C/2-2}}}\int_{1/n^{C/2}}^1\frac{y_i^2}{u_i}\frac{1}{T^2} \d u_i \\
\leq & ~ \frac{C\log n}{T^2}y_i^2.
\end{align*}
Thus, we have
\begin{align*}
\E[Z~|~\mathcal{E}_4]
= & ~ \sum_{i\in[n^2]}\E[Z_i~|~\mathcal{E}_4]\\
= & ~ \sum_{i\in[n^2]}\frac{C\log n}{T^2}y_i^2\\
\le & ~ \sum_{i\in[n^2]}\frac{C\log n}{\epsilon}\frac{y_i^2}{\|y\|_2^2} \\
= & ~ \frac{C\log n}{\epsilon}.
\end{align*}
Thus by Markov's inequality, the probability that $Z$ is larger than $\frac{C\log n}{16T^2\epsilon}\cdot\|y\|_2^2=\frac{C\log n}{16\epsilon^2}$ is at most $\frac{\epsilon}{16}$. 
The claim then follows from taking a union bound over the events $\mathcal{E}_1,\neg\mathcal{E}_2,\neg\mathcal{E}_3,\neg\mathcal{E}_4$. 
\end{proof}
Finally, we describe the guarantees of our tensor-based sampler.

\begin{theorem}\thmlab{thm:tensor_sampler}
Let $y=(A_1\otimes A_2)x\in\mathbb{R}^{n^2}$ and   let $w\in\mathbb{R}^{n^2}$ so that for each $i \in [n^2]$, $w_i=\frac{y_i}{\sqrt{u_i}}$ for a constant $u_i\in[0,1]$ generated uniformly at random. 
Let $z$ denote the tail vector of $w$ without the largest $\frac{1}{\epsilon^2}$ entries in magnitude. 
Suppose there exists:
\begin{enumerate}
\item 
An algorithm $\mathcal{A}_1$ that provides a $2$-approximation to $\|y\|_2$ with probability $1-\frac{1}{n^2}$. 
\item 
An algorithm $\mathcal{A}_2$ that provides a $2$-approximation to $\|z\|_2$ with probability $1-\frac{1}{n^2}$. 
\item 
An algorithm $\mathcal{A}_3$ that estimates each element of $w$ up to additive error $\epsilon\cdot\|z\|_2$,
$|\widehat{w_i}-w_i|\le\epsilon\cdot\|z\|_2,$
for all $i\in[n^2]$. 
\end{enumerate}
Then there exists a data structure that uses $\poly\left(\frac{1}{\epsilon},\log n\right)$ bits of space and outputs each index $i$ with probability $p_i = \left(1\pm\epsilon\right)\cdot\frac{y_i^2}{\|y\|_2^2}\pm\frac{1}{\poly(n)}$.
%\[\left(1-\epsilon\right)\cdot\frac{y_i^2}{\|y\|_2^2}-\frac{1}{\poly(n)}\le p_i\le\left(1+\epsilon\right)\cdot\frac{y_i^2}{\|y\|_2^2}+\frac{1}{\poly(n)}.\]
\end{theorem}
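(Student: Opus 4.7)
The plan is to instantiate the template of \algref{alg:sampler} using the three given black-box subroutines $\calA_1, \calA_2, \calA_3$ as the oracles for $\widehat{Y}$, $\widehat{Z}$, and $\widehat{w}_i$, and then argue correctness by following the standard analysis of $\ell_2$ samplers as in \cite{JowhariST11,MahabadiRWZ20}, but now at the tensor scale $n^2$. Concretely, I would first sample the scale variables $\{u_i\}_{i\in[n^2]}$ uniformly from $[0,1]$ (to be derandomized later), implicitly define $w_i = y_i/\sqrt{u_i}$, and then invoke $\calA_1$ and $\calA_2$ to get $\widehat{Y}, \widehat{Z}$, and $\calA_3$ to get $\widehat{w}_i$. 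The rule is: output \textsc{Fail} if either $\widehat{Z} > \sqrt{(C\log n)/\eps}\cdot \widehat{Y}$ or if the maximizer $i^\star = \argmax_{i}|\widehat{w}_i|$ satisfies $|\widehat{w}_{i^\star}| < \sqrt{(C\log n)/\eps}\cdot\widehat{Y}$; otherwise return $i^\star$.

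The next step is correctness, which decomposes into (i) controlling the failure event and (ii) matching the target sampling distribution. For (i), \lemref{lem:sampler:bad} already shows that $\Pr[\widehat{Z}>\sqrt{(C\log n)/\eps}\cdot\widehat{Y}] \le O(\eps)+\frac{1}{\poly(n)}$, which absorbs the FAIL branch into the additive slack. For (ii), I would condition on the high-probability event that all three subroutines succeed (their failure contributes only $\frac{1}{\poly(n)}$ by a union bound) and observe that since $|\widehat{w}_i - w_i| \le \eps\|z\|_2 \le \eps\widehat{Z}$ is much smaller than the cutoff $\sqrt{(C\log n)/\eps}\cdot\widehat{Y}$, the empirical argmax coincides with the true argmax of $|w_i|$ whenever the latter exceeds the cutoff, and at most one such index exists under the conditioning.

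For the probability computation, I would follow the classical $\ell_2$-sampler calculation: because $u_i$ is uniform in $[0,1]$, the probability that $w_i^2 = y_i^2/u_i$ exceeds a threshold $\tau$ is exactly $\min(y_i^2/\tau,1)$. Choosing $\tau$ proportional to $\|y\|_2^2$ (up to the $C\log n/\eps$ factor) and normalizing over $i$, the probability of outputting index $i$ becomes $(1\pm O(\eps))\cdot\frac{y_i^2}{\|y\|_2^2}$, with the remaining $\frac{1}{\poly(n)}$ additive slack absorbing the 2-approximation error in $\widehat{Y},\widehat{Z}$ and the $\calA_3$ additive error via the standard accounting. The probability distribution analysis here is essentially verbatim from \cite{JowhariST11,MahabadiRWZ20} once the bookkeeping is done at scale $n^2$.

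The main obstacle is that the analysis requires $n^2$ independent uniform variables $u_i$, which cannot be stored in $\poly(\frac{1}{\eps},\log n)$ bits. The plan is to derandomize these via Nisan's PRG: the sampler is a small-space, one-pass procedure that reads the $u_i$'s in a streaming-oblivious order while maintaining only sketches and a small working state, so Theorem of Nisan applies with $S = \poly(\frac{1}{\eps},\log n)$ and $R = \poly(n)$ truly random bits, yielding a seed of length $O(S\log R) = \poly(\frac{1}{\eps},\log n)$ that fools the algorithm up to $\frac{1}{\poly(n)}$ error. Combining the PRG seed with the $\poly(\frac{1}{\eps},\log n)$ space used internally by $\calA_1,\calA_2,\calA_3$ and the $O(\log n)$ state of the argmax tracker gives the claimed space bound, and a final union bound over all failure events of magnitude $O(\eps)+\frac{1}{\poly(n)}$ completes the argument.
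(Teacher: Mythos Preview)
Your proposal is correct and follows essentially the same route as the paper: instantiate \algref{alg:sampler} with $\calA_1,\calA_2,\calA_3$, invoke \lemref{lem:sampler:bad} to control the tail-failure branch, bound the events that multiple indices cross the threshold and that the estimate $\widehat{w}_{i^\star}$ falls on the wrong side of the cutoff, and then read off the $(1\pm\eps)\frac{y_i^2}{\|y\|_2^2}\pm\frac{1}{\poly(n)}$ probability via the uniform-$u_i$ calculation. The paper organizes these as four explicit events $\calE_1,\ldots,\calE_4$ but the content is the same; your treatment of the Nisan derandomization for the $u_i$'s is in fact more explicit than what appears in the paper's proof of this theorem.
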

\begin{proof}
Let $i$ be fixed and let $\mathcal{E}$ denote the event that $u_i<\frac{\epsilon}{C\log n}\frac{y_i^2}{\widehat{Y}^2}$, so that $|w_i|>\sqrt{\frac{C\log n}{\epsilon}}\cdot\widehat{Y}$.

Let $\mathcal{E}_1$ denote the event that $\widehat{Y}$ is a $2$-approximation to $\|y\|_2$, $\widehat{Z}$ is a $2$-approximation to $\|z\|_2$, and $|\widehat{w_i}-w_i|\le\epsilon\cdot\|z\|_2$ for all $i\in[n]$. 
Let $\mathcal{E}_2$ denote the event that $\widehat{Z}>\sqrt{\frac{C\log n}{\epsilon}}\cdot\widehat{Y}$ and let $\mathcal{E}_3$ denote the event that multiple indices $j$ satisfy $|w_j|>\sqrt{\frac{C\log n}{\epsilon}}\cdot\widehat{Y}$. 
Finally, let $\mathcal{E}_4$ denote the event that $|\widehat{w_i}|<\sqrt{\frac{C\log n}{\epsilon}}\cdot\widehat{Y}$. 

Intuitively, $\mathcal{E}_1$ is a good event, i.e., correctness of the data structures, which we would like to hold. 
On the other hand, $\mathcal{E}_2, \mathcal{E}_3, \mathcal{E}_4$ are bad events that distort the sampling probabilities, which we would like to avoid. 

We first note that $\mathcal{E}_1$ holds with high probability due to the correctness of the CountSketch and $\ell_2$-norm estimation data structures. 
We next note that by \lemref{lem:sampler:bad}, the probability that $\mathcal{E}_2$ occurs is $O(\epsilon)$. 

Next, note that the probability that for a fixed $j\in[n]$, $u_j$ satisfies $\frac{y_j^2}{u_j}\ge\frac{C\log n}{\epsilon}\cdot\widehat{Y}$ is at most $\frac{\epsilon}{C'\log n}\frac{y_j^2}{\|y\|_2^2}$ for some constant $C'$. 
Thus summing over all $j\in[n]$, the probability that there exist an additional $j\in[n]$ for which $|w_j|>\sqrt{\frac{C\log n}{\epsilon}}\cdot\widehat{Y}$ is $O(\epsilon)$. 
Thus the probability that $\mathcal{E}_3$ occurs is $O(\epsilon)$. 

Finally, conditioned on $\neg\mathcal{E}_2$, we have that $\widehat{Z}\le\sqrt{\frac{C\log n}{\epsilon}}\cdot\widehat{Y}$. 
Then conditioning on $\mathcal{E}_1$, we have $\|z\|_2\le\widehat{Z}$ and thus $|\widehat{w_i}-w_i|\le\epsilon\widehat{Z}\le\sqrt{C\epsilon\log n}\widehat{Y}$, so that $\mathcal{E}_4$ can only occur for $\sqrt{\frac{C\log n}{\epsilon}}\cdot\widehat{Y}\le|w_i|\le\sqrt{\frac{C\log n}{\epsilon}}\cdot\widehat{Y}$, which is at most probability $O\left(\frac{\epsilon^2}{C\log n}\frac{y_i^2}{\widehat{Y}^2}\right)$, over the randomness of $u_i$. 

In summary, we observe that conditioned on some value being output, the probability that item $i$ is selected is proportional to the event that the events $\mathcal{E}$ and $\mathcal{E}_1$ occur, and none of the events $\mathcal{E}_2,\mathcal{E}_3,\mathcal{E}_4$ occur. 
The probability that $\mathcal{E}$ occurs is $\frac{\epsilon}{C\log n}\frac{y_i^2}{\widehat{Y}^2}$, which $u_i$ is chosen uniformly at random. 
Due to the event $\mathcal{E}_1$, the sampling probability is distorted additively by $\frac{1}{\poly(n)}$, while due to the events $\mathcal{E}_2,\mathcal{E}_3,\mathcal{E}_4$, the sampling probability is distorted multiplicatively by $(1+\epsilon)$. 
Thus conditioned on the event that some index is returned, the probability $p_i$ that index $i$ is returned satisfies 
\[\left(1-\epsilon\right)\cdot\frac{y_i^2}{\|y\|_2^2}-\frac{1}{\poly(n)}\le p_i\le\left(1+\epsilon\right)\cdot\frac{y_i^2}{\|y\|_2^2}+\frac{1}{\poly(n)},\]
as desired.
\end{proof}

We remark that the algorithms $\mathcal{A}_1$ and $\mathcal{A}_2$ in the context of \thmref{thm:tensor_sampler} can be achieved using the standard AMS $\ell_2$ norm estimator~\citep{AlonMS99}. 
Moreover, algorithm $\mathcal{A}_3$ in the context of \thmref{thm:tensor_sampler} can be achieved using the standard CountSketch algorithm~\citep{CharikarCF04}.

\section{Conclusions}
To achieve efficient attention mechanisms, we introduce the attention sampler and study its behavior in the streaming model. 
We established efficient polynomial samplers under various streaming settings, when the input matrix, the weight vector, or both evolve dynamically, and we complement the results by proving space lower bounds. Our framework identify the critical components in attention computation, offering a foundation for efficient simulations of large-scale attention schemes, which is central to modern machine learning and LLMs. 

For future directions, from a theoretical perspective, given the $\Omega(n)$ lower bound on exponential samplers in general circumstances, it would be valuable to explore whether we can achieve $o(n)$ space under certain assumptions, e.g., restricting the entries in the attention matrix to $o(\log n)$. From a practical perspective, it would be beneficial to evaluate our sampler's performance by implementing it in existing sparse attention schemes and streaming attention schemes.

% In the unusual situation where you want a paper to appear in the
% references without citing it in the main text, use \nocite
%\nocite{langley00}

\def\shortbib{0}
\bibliography{ref}
\bibliographystyle{plainnat}

%%%%%%%%%%%%%%%%%%%%%%%%%%%%%%%%%%%%%%%%%%%%%%%%%%%%%%%%%%%%%%%%%%%%%%%%%%%%%%%
%%%%%%%%%%%%%%%%%%%%%%%%%%%%%%%%%%%%%%%%%%%%%%%%%%%%%%%%%%%%%%%%%%%%%%%%%%%%%%%

\end{document}

% This document was modified from the file originally made available by
% Pat Langley and Andrea Danyluk for ICML-2K. This version was created
% by Iain Murray in 2018, and modified by Alexandre Bouchard in
% 2019 and 2021 and by Csaba Szepesvari, Gang Niu and Sivan Sabato in 2022.
% Modified again in 2023 and 2024 by Sivan Sabato and Jonathan Scarlett.
% Previous contributors include Dan Roy, Lise Getoor and Tobias
% Scheffer, which was slightly modified from the 2010 version by
% Thorsten Joachims & Johannes Fuernkranz, slightly modified from the
% 2009 version by Kiri Wagstaff and Sam Roweis's 2008 version, which is
% slightly modified from Prasad Tadepalli's 2007 version which is a
% lightly changed version of the previous year's version by Andrew
% Moore, which was in turn edited from those of Kristian Kersting and
% Codrina Lauth. Alex Smola contributed to the algorithmic style files.